\theoremstyle{plain}
\newtheorem{theorem}{Theorem}[section]
\newtheorem{proposition}[theorem]{Proposition}
\theoremstyle{definition}
\theoremstyle{remark}
\icmltitlerunning{NavFormer: IGRF Forecasting in Moving Coordinate Frames}
\begin{document}

\twocolumn[
  \icmltitle{NavFormer: IGRF Forecasting in Moving Coordinate Frames}

  \icmlsetsymbol{equal}{*}

  \begin{icmlauthorlist}
    \icmlauthor{Yoontae Hwang}{pusan,oaq,arrakis}
    \icmlauthor{Dongwoo Lee}{kaist}
    \icmlauthor{Minseok Choi}{oaq,arrakis,kaist} 
    \icmlauthor{Heechan Park}{oaq,arrakis} \\
    \icmlauthor{Yong Sup Ihn}{add}
    \icmlauthor{Daham Kim}{oaq,arrakis,equal}
    \icmlauthor{Deok-Young Lee}{oaq,arrakis,kaist,equal}
  \end{icmlauthorlist}

  \icmlaffiliation{pusan}{Pusan National University, Busan, Republic of Korea}
  \icmlaffiliation{oaq}{OAQ Co. Ltd., Republic of Korea} 
  \icmlaffiliation{add}{Agency for Defense Development, Daejeon, Republic of Korea} 
  \icmlaffiliation{arrakis}{Arrakis Technologies Corp., USA} 
  \icmlaffiliation{kaist}{Korea Advanced Institute of Science and Technology, Daejeon, Republic of Korea}

  \icmlcorrespondingauthor{Deok-Young Lee}{dleeao@oaqcorp.com} 
  \icmlcorrespondingauthor{Daham Kim}{daham.kim@oaqcorp.com}     

  \icmlkeywords{Machine Learning, ICML, IGRF Forecasting}

  \vskip 0.3in
]

\printAffiliationsAndNotice{}  

Triad magnetometer components change with sensor attitude even when the IGRF total intensity target stays invariant. NavFormer forecasts this invariant target with rotation invariant scalar features and a Canonical SPD module that stabilizes the spectrum of window level second moments of the triads without sign discontinuities. The module builds a canonical frame from a Gram matrix per window and applies state dependent spectral scaling in the original coordinates. Experiments across five flights show lower error than strong baselines in standard training, few shot training, and zero shot transfer. The code is available at: https://anonymous.4open.science/r/NavFormer-Robust-IGRF-Forecasting-for-Autonomous-Navigators-0765

\section{Introduction}
Moving coordinate frames turn simple forecasting problems into geometry problems. Sensors on drones, aircraft, and wearables record vectors in a body coordinate system. The platform attitude changes over time. The coordinate axes rotate with the sensor. A fixed physical field then produces different channel readings even when the underlying state is unchanged. The IGRF total intensity is such a scalar \citep{alken2021international}. It is invariant to any rotation of the sensor frame. The input includes vector magnetometer triads whose components depend strongly on orientation \citep{gebre2006calibration}. 
The learning problem must robustly model physical dynamics despite the spectral instability caused by coordinate choice, Also, in a rotating frame, axis aligned channels do not refer to fixed directions. The component at two times can correspond to different physical vectors. This channel drift introduces structural nonstationarity that is not caused by the underlying field. A forecaster then allocates capacity to adapt to varying signal scales instead of learning the field evolution. The effect intensifies when the training set does not cover the test orientation distribution. Few shot training and zero shot transfer then suffer even when the target is invariant.

Existing approaches often try to remove rotation dependence before sequence modeling. One option uses only rotation invariant scalars such as norms and dot products \citep{bulling2014tutorial, canciani2020analysis}. This yields stable inputs but it discards directional information that encodes vector interactions and dynamics. Another option canonicalizes vectors by projecting them into a data derived basis such as a PCA eigenframe \citep{jolliffe2011principal, li2021closer}. Canonicalization can standardize orientation but eigenvectors are defined only up to an independent sign. Small perturbations can flip an eigenvector sign and create a discontinuous representation \citep{davis1970rotation}. Discontinuities break the smooth dependence that gradient based optimization assumes \citep{ionescu2015matrix}. Geometric deep learning instead encodes group structure through equivariant and steerable architectures \citep{cohen2016group,fuchs2020se,satorras2021n}. These methods show that inductive bias for symmetry can improve data efficiency. Our setting still requires a continuous representation that preserves raw vector dynamics for forecasting.

This paper follows a different principle. We do not need strict invariance at the input. We use a window level conditioning step that stabilizes the scale and condition number of the triad second moment. We compute a Gram matrix within each window and apply spectral reweighting in its principal directions. A symmetric positive definite transform provides a continuous geometric preconditioning \citep{huang2017riemannian}. The transform reduces anisotropy of the uncentered second moment and ensures a numerically well-conditioned optimization landscape. This helps the downstream transformer spend less capacity on scale correction. The model still receives directional information and it can exploit cross axis structure.


\section{Related Works}
\subsection{Geometric Constraints in Time Series Forecasting}
Transformer forecasters model long range dependencies with attention and tokenization choices \citep{wen2022transformers}. Patch based and variate based designs such as PatchTST \citep{nie2022time} and iTransformer \citep{liu2023itransformer} improve efficiency and cross channel reasoning, and Crossformer \citep{zhang2023crossformer} further structures attention over channel and temporal axes. Linear baselines such as DLinear \citep{zeng2023transformers} show that simple temporal heads can be competitive when channel semantics are stable. These models usually treat each channel as a fixed semantic entity across windows \citep{chencloser}. Rotating sensing breaks that assumption. For triad measurements, an unknown attitude rotation mixes the three components, so the meaning of each channel shifts while the prediction target can remain approximately rotation invariant \citep{crassidis2005real}. A standard forecaster must learn an implicit coordinate transformation before it can learn temporal dynamics. This makes coordinate system transformation a bottleneck, and it becomes worse under limited training coverage of orientations.

Geometric deep learning builds equivariance into the architecture through group actions, including group equivariant convolution \citep{cohen2016group} and equivariant attention and message passing \citep{fuchs2020se,satorras2021n}. Most of this work targets spatial prediction on sets, graphs, or short horizon dynamics, rather than long horizon forecasting with heterogeneous scalar and vector channels. Magnetic navigation and aeromagnetic compensation historically address rotation and platform effects through calibration models and recursive estimation, often built around Kalman filtering \citep{kalman1960new} and physics based compensation models such as Tolles Lawson \citep{tolles1950magnetic,leliak2009identification}. Modern airborne magnetic anomaly navigation uses these components together with map information and inertial sensing \citep{canciani2020analysis}. NavFormer targets the same bottleneck but replaces an explicit attitude and compensation stack with a learned forecaster that injects a geometric inductive bias at the representation level.

\subsection{Stable Canonicalization via SPD Manifold}
A common route to rotation robustness is canonicalization. One computes a window level second moment, extracts an eigenframe, and projects vectors into that frame. This can reduce nuisance rotations, but it can also introduce discontinuities. Eigenvectors have a sign ambiguity, and near degenerate spectra cause axis instability, so small input perturbations can induce discrete flips or basis drift \citep{davis1970rotation,yu2015useful}. Riemannian learning on the SPD cone provides tools to model second order statistics without leaving the manifold \citep{arsigny2007geometric}. SPDNet and later normalization layers define architectures that preserve positive definiteness and respect SPD geometry \citep{huang2017riemannian, bronstein2021geometric}. Prior work often treats SPD matrices as the primary feature type for classification or recognition \citep{tuzel2006region}.
NavFormer uses SPD structure as a learnable preconditioner for vector time series rather than as an endpoint representation. It extracts a canonical eigenframe from a triad Gram matrix, then applies a state conditioned SPD transform that performs spectral reweighting in that eigenframe while acting in the original coordinates. This avoids the sign flip discontinuity that appears in explicit projections, since the induced SPD map is invariant to eigenvector sign choices. The result can reduce anisotropy of the window second moment and can make the condition number and axis scale more consistent across windows. This can give the downstream transformer an input space that is easier to optimize for forecasting.
\section{NavFormer}\label{section:method}
We forecast IGRF total intensity from sensor data \citep{gnadt2023data, gnadt2023signal}. The primary challenge involves the variability of the sensor's coordinate frame throughout the recording. While the underlying magnetic field magnitude is independent of the sensor's orientation, the raw observations are heavily dependent on it. To address this, we propose an architecture designed with a geometric inductive bias \citep{cohen2016group}. We augment the input with analytically rotation-invariant scalars and apply a geometrically regularized vector modulation layer \citep{fuchs2020se, satorras2021n}. This layer applies SPD based spectral reweighting of the triads in a data driven canonical frame, using the global state summary. Unlike rigid projections that introduce discontinuities our approach aligns the signal statistics while maintaining numerical stability. The resulting spectrally reweighted representation is processed by a grid transformer for sequence modeling.

\begin{figure*}[ht]
\small{
  \centering
  \includegraphics[width=0.83\linewidth]{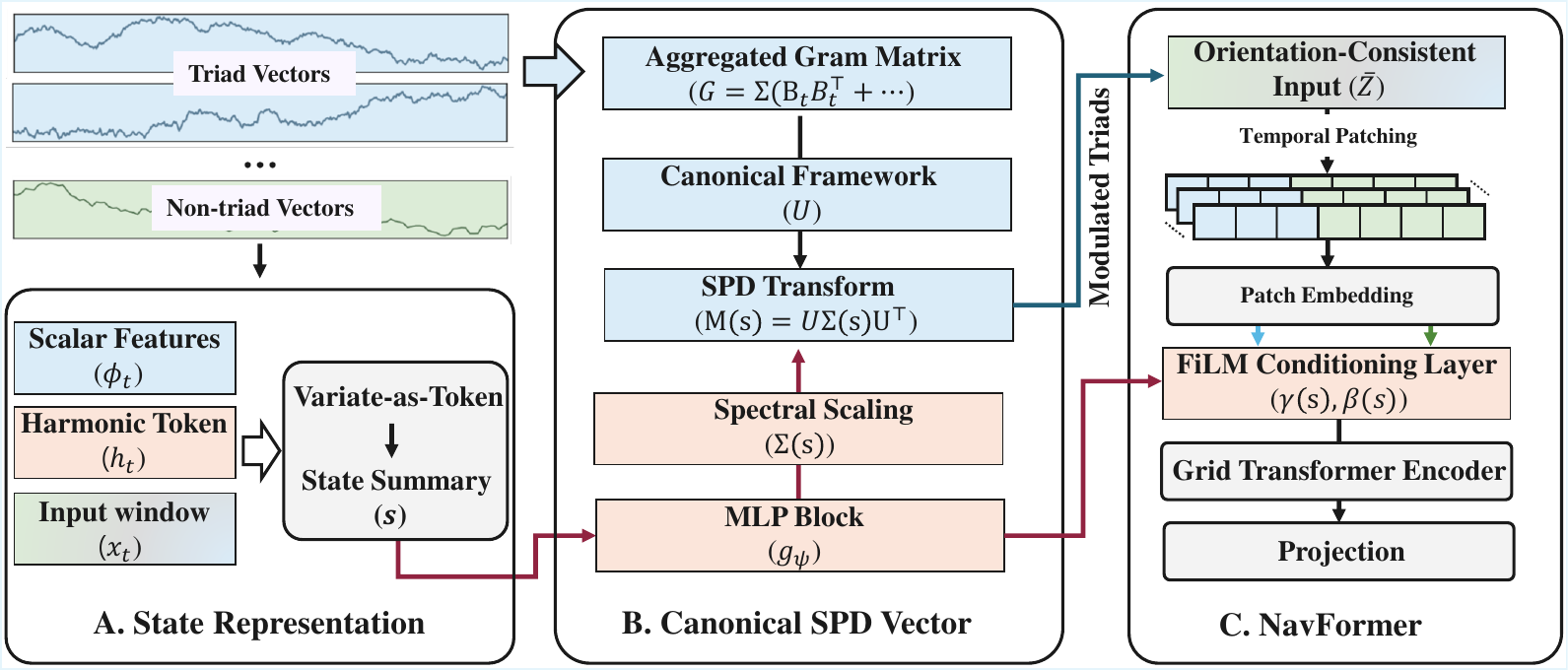}
  \caption{Overview of the NavFormer architecture.}
  \label{figure_main}
 \vspace{-14pt}}
\end{figure*}

\subsection{Problem Setting} \label{subsection:problem}
Let $X \in \mathbb{R}^{L \times D_0}$ be an input window of length $L$ with $D_0$ channels. These input features are heterogeneous. At each time step $t$, the slice is $x_{t} = (B_{t}, C_{t}, D_{t}, r_{t})$, where $B_{t}, C_{t}, D_{t} \in \mathbb{R}^3$ are the triad vectors and $r_{t}$ collects scalar telemetry. The IGRF magnitude is our prediction target and remains invariant under changes in the sensor coordinate frame. We model a change in the sensor coordinate frame as a rotation $R \in \mathbf{SO}(3)$. The attitude change is small, Within each input window, we approximate this change as fixed rotation $R$, which captures piecewise constant orientation over the window. This rotation acts on the triad vectors $(B_{t}, C_{t}, D_{t})$ while leaving the non triad variables $r_{t}$ unchanged $(B_{t}, C_{t}, D_{t}, r_{t}) \mapsto (R B_{t}, R C_{t}, R D_{t}, r_{t})$.

Our goal is to forecast an IGRF total intensity $y \in \mathbb{R}^{L_{\text{pred}} \times 1}$ over a horizon $L_{\text{pred}}$ given the input $X$. A naive model must learn to internalize the rotation group structure entirely from data to map the rotating $X$ to the invariant $y$. Instead of enforcing strict invariance we use a geometrically adaptive conditioning step. The step rescales the triad second moment within each window to reduce anisotropy and stabilize the condition number seen by the sequence model. This helps the model learn temporal dynamics in a feature space with more consistent axis scaling, while preserving directional information.

\subsection{Scalar Geometric Features} \label{subsection:scalar_features}
From the vector triads $B_{t},C_{t},D_{t} \in \mathbb{R}^3$, we construct scalar features that are naturally insensitive to orientation. For three vectors, we define the per-time-step feature
\begin{equation} \label{eq3}
    \phi_{t} \coloneqq \big[ \mathbf{n}_t^\top, \; \mathbf{d}_t^\top, \; \mathbf{c}_t^\top \big]^\top \in \mathbb{R}^9,
\end{equation}
Where $\mathbf{n}_t = [\|B_t\|, \|C_t\|, \|D_t\|]^\top$ contains the norms, $\mathbf{d}_t$ contains the pairwise dot products, and $\mathbf{c}_t$ contains the cross product norms. Equation ~(\ref{eq3}) is analytically invariant under joint rotations of $(B_{t},C_{t},D_{t})$. To capture temporal dynamics, we encode time using a fixed bank of sinusoidal harmonic tokens, motivated by Fourier feature mappings \citep{tancik2020fourier}. For sampling frequency $f_{s}$ and predefined frequencies $\{f_k\}_{k=1}^K$, we let $\tau_{t} \coloneqq t / f_{s}$ and define 
\begin{equation}
    h_{t} \coloneqq \Big[ \sin(2\pi f_k \tau_{t}), \; \cos(2\pi f_k \tau_{t}) \Big]_{k=1}^{K} \in \mathbb{R}^{2K}.
\end{equation}
The concatenated per-step feature thus reads $u_{t} \coloneqq [ x_{t};  \phi_{t};  h_{t} ] \in \mathbb{R}^{D_0 + 9 + 2K}.$. We stack $(u_{t})_{t=1}^L$ over time yields the augmented sequence 
\begin{equation}
    Z \coloneqq [u_1, \dots, u_L]^\top \in \mathbb{R}^{L \times D_{\text{aug}}},
\end{equation}
Where $D_{\text{aug}} \coloneqq D_0 + 9 + 2K$. Next, to capture the global temporal dynamics of each feature while isolating rotation-invariant information, we adopt a variate-as-token strategy \citep{liu2023itransformer}. Therefore, we treat each variable channel as an independent token. Specifically, we transpose the augmented sequence $Z$ and apply a learnable linear projection $W \in \mathbb{R}^{d \times L}$ along the temporal dimension
\begin{equation}
    e_c = W (Z_{:, c}) + b \in \mathbb{R}^{d}, \quad c = 1, \dots, D_{\text{aug}}.
\end{equation}
This operation compresses the time series of the $c$-th variable into a single $d$-dimensional embedding $e_c$. Building on this representation, we construct a global context vector designed to remain robust against sensor orientation. We divide the feature indices $\{1, \dots, D_{\text{aug}}\}$ into two distinct sets $\mathcal{M}$ and $\mathcal{S}$. $\mathcal{M}$ holds the raw vector channels subject to rotation. $\mathcal{S}$ contains the rotation-invariant channels, including $r_{t}$, $\phi_{t}$, and $h_{t}$. We then compute the state summary $\mathbf{s}$ by averaging the embeddings of the invariant set
\begin{equation}
    \mathbf{s} \coloneqq \frac{1}{|\mathcal{S}|} \sum_{c \in \mathcal{S}} e_c \in \mathbb{R}^d.
\end{equation}
Because every feature within $\mathcal{S}$ is immune to the rotation action, the resulting summary $\mathbf{s}$ provides a comparatively stable, frame-independent anchor. Conditioning the geometric modulation layers on $\mathbf{s}$ helps the model adapt its frame-dependent processing using a state signal that remains consistent across sensor orientations.

\subsection{Canonical SPD module} \label{subsection:SPD_FiLM}
We introduce the Canonical SPD module as a geometric preconditioner. While the scalar features $\phi_{t}$ provide a rotation-invariant anchor, the vector channels retain directional information essential for physical consistency. It rescales the window aggregated uncentered 2nd moment of the triad vectors. The modulation depends on the global state summary. The output triads transform by the same rotation as the input triads. Functionally, the process is twofold. First, we extract a canonical frame from an aggregated triad Gram matrix. Second, we apply a state dependent spectral reweighting within that frame using an SPD transform \citep{huang2017riemannian, li2018towards}. A standard approach to geometric learning involves projecting input vectors onto a canonical basis $U$ (e.g., computing $U^\top B_t$) to achieve invariance. However, eigenvectors $U$ are only defined up to a sign flip ($u \leftrightarrow -u$). This sign ambiguity creates discontinuity in the projected features, as small perturbations in the input can cause discrete jumps in the representation. Such instability hinders the optimization of continuous forecasting models. To resolve this, we employ a spectral modulation $M(\mathbf{s}) = U \Sigma(\mathbf{s}) U^\top$. This transform operates in the global frame and is invariant to the sign choice of $U$, as $(-U)\Sigma (-U)^\top = U \Sigma U^\top$. Therefore this component acts as a geometric preconditioner.It rescales energy along the principal axes of the aggregated uncentered second moment. This removes the sign flip ambiguity of eigenvectors. This can reduce anisotropy in the second moment.

\paragraph{Canonical Frame from Aggregated Gram Matrix}
To establish a stable geometric reference, we compute an aggregated Gram matrix for a window of triad vectors $\{B_{t}, C_{t}, D_{t}\}_{t=1}^L$. This matrix is an uncentered 2nd moment statistic of the triad vectors over the window.
\begin{equation}
G \coloneqq \sum_{t=1}^L ( B_{t} B_{t}^\top + C_{t} C_{t}^\top + D_{t} D_{t}^\top ) \in \mathbb{R}^{3 \times 3}.
\label{eq:eigen}
\end{equation}
By definition, $G$ is symmetric and positive semi-definite. We perform an eigen-decomposition $ G = U \Lambda U^\top$, yielding eigenvalues $\lambda_1 \ge \lambda_2 \ge \lambda_3 \ge 0$ and orthonormal eigenvectors $U = [u_{1} u_{2}u_{3}] \in \mathbb{R}^{3 \times3}$. We designate $U$ as the canonical frame, as it intrinsically maps the window's dominant directions of variation.To ensure that the canonical frame is numerically stable, we use a mild separation assumption on the aggregated Gram matrix $G$. Specifically, for the vast majority of windows, we assume $G$ has three distinct eigenvalues with a non-trivial spectral gap: there exists $\delta>0$ such that
\begin{equation}
\frac{\lambda_1 - \lambda_2}{\lambda_1 + \epsilon} \ge \delta, \qquad \frac{\lambda_2 - \lambda_3}{\lambda_2 + \epsilon} \ge \delta.
\end{equation}
Under this assumption, the eigenbasis $U$ is unique up to independent sign flips and varies smoothly with $G$, which is crucial for continuity of the canonicalization map.

\paragraph{State-Dependent SPD Scaling} 
 We map the state summary $\mathbf{s} \in \mathbb{R}^d$ (defined in Section \ref{subsection:scalar_features}) to three scalar scales via an MLP
\begin{equation}
\mathbf{d}(\mathbf{s}) = (d_{1}, d_{2}, d_{3}) = g_\psi(\mathbf{s})\in \mathbb{R}^3,
\end{equation}
where $g_{\psi}$ is a lightweight neural network. Each scalar $d_{i}$ targets one of the three canonical eigenvectors ($u_{1}$, $u_{2}$, $u_{3}$) derived from the aggregated Gram matrix, allowing the model to independently modulate signal energy along each principal axis. To reduce excessive contraction along any canonical axis, we impose a lower bound of $\epsilon$ on the scales
\begin{equation}
\sigma_{i}(\mathbf{s}) \coloneqq \epsilon + \text{softplus}(d_i), \quad i = 1,2,3.
\end{equation}
These scales form a diagonal matrix
\begin{equation}
\Sigma(\mathbf{s}) \coloneqq \mathbf{diag}(
\sigma_1(\mathbf{s}),
\sigma_2(\mathbf{s}),
\sigma_3(\mathbf{s}))
\in \mathbb{R}^{3 \times 3}.
\end{equation}

We use it as a safeguard against near zero scaling that would suppress a direction of the triad signal. The model can still adjust the relative weighting across canonical directions through $\Sigma(\mathbf{s})$. We then define the resulting canonical SPD transform as
\begin{equation}
M(\mathbf{s}) \coloneqq U \Sigma(\mathbf{s}) U^\top \in \mathbf{SPD}(3),
\end{equation}
where $\mathbf{SPD}(3)$ is the cone of $3\times 3$ symmetric positive definite matrices. Applying $M(\mathbf{s})$ to the input triads yields the modulated sequences
\begin{equation}
\tilde{B}_{t} \coloneqq M(\mathbf{s}) B_{t},\quad
\tilde{C}_{t} \coloneqq M(\mathbf{s}) C_{t},\quad
\tilde{D}_{t} \coloneqq M(\mathbf{s}) D_{t}.
\end{equation}
The following proposition confirms that $M(\mathbf{s})$ preserves the canonical eigenvectors while selectively reweighting the covariance spectrum.

\begin{proposition}[Spectral Alignment of Canonical SPD] \label{proposition:spectral_alignment}
Let $G = U \Lambda U^\top$ be the aggregated Gram matrix and $M(\mathbf{s}) =U \Sigma(\mathbf{s}) U^\top$ the canonical SPD transform. Define the Gram matrix of the modulated triads as
\begin{equation}
\tilde{G}(\mathbf{s})\coloneqq\sum_{t=1}^L ( \tilde{B}_{t} \tilde{B}_{t}^\top + \tilde{C}_{t} \tilde{C}_{t}^\top + \tilde{D}_{t} \tilde{D}_{t}^\top ).
\end{equation}

Then \begin{equation}
\tilde{G}(\mathbf{s}) = U \Sigma(\mathbf{s}) \Lambda \Sigma(\mathbf{s}) U^\top.
\end{equation}

In particular, $\tilde{G}(\mathbf{s})$ has eigenvectors $U$ and eigenvalues$\sigma_i(\mathbf{s})^2 \lambda_i$ for $i=1,2,3$.\end{proposition}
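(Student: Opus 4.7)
The plan is to unfold the definition of $\tilde{G}(\mathbf{s})$ by pulling the shared symmetric factor $M(\mathbf{s})$ out of each rank-one summand, reducing the claim to a one-line algebraic identity about conjugation of $G$ by an SPD matrix. First I would use symmetry of $M(\mathbf{s})$, which holds by construction since $M(\mathbf{s}) = U \Sigma(\mathbf{s}) U^\top$ with $\Sigma(\mathbf{s})$ diagonal, to rewrite each outer product as $\tilde{B}_t \tilde{B}_t^\top = M(\mathbf{s}) B_t B_t^\top M(\mathbf{s})$, and analogously for $C_t$ and $D_t$. Summing over $t$ and using the definition of $G$ from Equation~(\ref{eq:eigen}) then yields the compact form $\tilde{G}(\mathbf{s}) = M(\mathbf{s})\, G\, M(\mathbf{s})$.

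Next I would substitute the two eigendecompositions $M(\mathbf{s}) = U \Sigma(\mathbf{s}) U^\top$ and $G = U \Lambda U^\top$ into that product. Because $U$ is orthonormal, $U^\top U = I$, and the two interior $U^\top U$ factors collapse, leaving $\tilde{G}(\mathbf{s}) = U \Sigma(\mathbf{s}) \Lambda \Sigma(\mathbf{s}) U^\top$. Since $\Sigma(\mathbf{s})$ and $\Lambda$ are both diagonal and therefore commute, the inner block $\Sigma(\mathbf{s}) \Lambda \Sigma(\mathbf{s})$ is simply the diagonal matrix with entries $\sigma_i(\mathbf{s})^2 \lambda_i$. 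This is already an orthogonal diagonalization of $\tilde{G}(\mathbf{s})$ in the frame $U$, so the columns of $U$ are eigenvectors and the eigenvalues are $\sigma_i(\mathbf{s})^2 \lambda_i$ for $i=1,2,3$, giving the stated conclusion.

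There is no substantial obstacle here; the whole argument reduces to the identity $\tilde{G} = MGM$ followed by orthogonal diagonalization. The one point I would flag explicitly, to connect back to the motivation in Section~\ref{subsection:SPD_FiLM}, is that the identity does not depend on any sign convention for the columns of $U$: replacing $u_i$ by $-u_i$ leaves both $M(\mathbf{s}) = \sum_i \sigma_i(\mathbf{s})\, u_i u_i^\top$ and $\tilde{G}(\mathbf{s}) = \sum_i \sigma_i(\mathbf{s})^2 \lambda_i\, u_i u_i^\top$ unchanged, confirming the sign-invariance that an explicit projection $U^\top B_t$ would lack. I would also note that the spectral gap assumption is \emph{not} needed for this algebraic identity; it is invoked only to ensure that $U$ depends smoothly on $G$, so that the canonical frame, and hence $M(\mathbf{s})$ as a function of the window, is continuous.
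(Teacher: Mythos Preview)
Your proposal is correct and follows essentially the same route as the paper: pull the shared factor $M(\mathbf{s})$ out of each rank-one term to obtain $\tilde{G}(\mathbf{s}) = M(\mathbf{s})\,G\,M(\mathbf{s})^\top$ (equivalently $M G M$ by symmetry), then substitute the two eigendecompositions and collapse the interior $U^\top U$ factors. Your additional remarks on sign invariance and on the spectral-gap assumption being irrelevant to this purely algebraic identity are accurate and useful context, though not part of the paper's proof itself.
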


\begin{proof}[Proof] See Appendix ~\ref{appendix_math}. \end{proof}

Proposition \ref{proposition:spectral_alignment} demonstrates that the Canonical SPD reweights the spectrum of the triad Gram matrix. While the output vectors $\tilde{B}_{t}$ remain rotation-equivariant, their statistical structure is stabilized. This spectral normalization acts as a geometric preconditioner, providing the downstream forecaster with a numerically well-conditioned input space that is easier to optimize. Their window aggregated uncentered 2nd moment becomes $\tilde{G}(\mathbf{s}) = U \Sigma(\mathbf{s}) \Lambda \Sigma(\mathbf{s}) U^\top$. This can reduce variation in the spectrum of the second moment across windows. The downstream sequence model then receives inputs with a more consistent spectrum. This can improve numerical conditioning during training.

\subsection{Patch-Channel Grid Transformer}\label{subsection:patch_channel_grid} 
The Canonical SPD module and invariant scalar augmentation yield a sequence that is geometrically regularized, serving as the input to our patch-channel grid transformer. We define the feature slice $\bar{x}_{t} \in \mathbb{R}^{D_0}$ by substituting the raw magnetometer triads with their SPD-modulated counterparts $(\tilde{B}_{t}, \tilde{C}_{t}, \tilde{D}_{t})$. All other channels, including scalar telemetry, are preserved in their original form. The invariant scalars $\phi_{t}$ are derived from the raw triads $(B_{t}, C_{t}, D_{t})$ before any SPD modulation occurs to preserve the original signal intensity information. We assemble the final spectrally reweighted vector for each step as 
\begin{equation}
\bar{u}_{t} \coloneqq [\bar{x}_{t};\phi_{t};h_{t}] \in \mathbb{R}^{D_{\text{aug}}}.
\end{equation}
where $D_{\text{aug}} = D_0 + 9 + 2K$. These are stacked temporally to form the input tensor as here:
\begin{equation}
    \bar{Z} \coloneqq [\bar{u}_1, \dots, \bar{u}_L]^\top \in \mathbb{R}^{L \times D_{\text{aug}}}.
\end{equation}
To efficiently resolve both temporal dependencies and cross-channel interactions \citep{zhang2023crossformer} within $\bar{Z}$, we utilize a grid based self-attention mechanism operating on channel-specific patches.

\paragraph{Temporal Patching per Channel}  We process each channel independently by extracting overlapping temporal patches \citep{nie2022time}. Let $P$ denote the patch length and $S$ the stride. To guarantee full coverage at the sequence boundaries, we apply replication padding of $S$ time steps to the end of the sequence, resulting in an effective length of $L+S$. The number of patches is then $M_{\mathbf{p}} \coloneqq \lfloor \frac{L - P}{S} \rfloor + 2.$

Working from a channel-first perspective $\bar{Z}^\top \in \mathbb{R}^{D_{\text{aug}} \times L}$, we define the temporal patch vector for channel $c \in \{1,\dots,D_{\text{aug}}\}$ and patch index $m \in \{1,\dots,M_{\mathbf{p}}\}$
\begin{equation}
z_{c,m} \coloneqq
[\bar{Z}^\top_{c,(m-1)S+1},\dots,\bar{Z}^\top_{c,(m-1)S+P}]^\top
\in \mathbb{R}^{P},
\end{equation}
where any indices exceeding the original length $L$ are populated via the replication padding.

\paragraph{Patch Embedding} We map each raw patch into the model dimension $d$ via a shared linear projection as here
\begin{equation}
p_{c,m} \coloneqq W_{\text{patch}} z_{c,m} + b_{\text{patch}}
\in \mathbb{R}^{d},
\end{equation}

with weights $W_{\text{patch}} \in \mathbb{R}^{d \times P}$ and bias $b_{\text{patch}}\in\mathbb{R}^{d}$. This operation yields a structured 2D grid of tokens $\{p_{c,m}\}$, indexed by channel $c$ and patch $m$. These embeddings encode local temporal patterns, forming the substrate for the global state-conditioning mechanism described next.

\paragraph{FiLM Conditioning of Token Embeddings}\label{subsubsection:film_conditioning}  To ground the local embeddings in the global system state, we modulate the patch tokens $p_{c,m}$ using the summary vector $\mathbf{s}\in\mathbb{R}^{d}$ (derived in Sec.~\ref{subsection:scalar_features}). We compute the affine FiLM parameters via a projection network $q_\omega$ as
\begin{equation}
\begin{split}
    (\hat{\gamma}(\mathbf{s}), \beta(\mathbf{s})) &= q_\omega(\mathbf{s}) \in \mathbb{R}^{2d}, \\
    \gamma(\mathbf{s}) &\coloneqq 1 + \tanh(\hat{\gamma}(\mathbf{s})).
\end{split}
\end{equation}
This modulation is applied selectively. We apply it only to the magnetometer channels so vector token processing depends on the global state summary. We set $\tilde{p}_{c,m} \coloneqq \gamma(\mathbf{s}) \odot p_{c,m} + \beta(\mathbf{s})$ with $c \in \mathcal{M}$. Auxiliary telemetry channels remain unchanged, so $\tilde{p}_{c,m} \coloneqq p_{c,m}$ with $c \notin \mathcal{M}$. This mechanism diffuses global state context across the token grid. The tanh bounded scaling factor $\gamma(\mathbf{s})$ helps keep the modulation stable. Finally, we add learnable embeddings for channel index and patch index, then flatten the grid into a unified token sequence of length $D_{\text{aug}}M_{\mathbf{p}}$
\begin{equation}
T \coloneqq \mathbf{vec}(\{\tilde{p}_{c,m}\}_{c=1,m=1}^{D_{\text{aug}},M_{\mathbf{p}}})
\in \mathbb{R}^{(D_{\text{aug}} M_{\mathbf{p}})\times d},
\end{equation}
This structure enables the transformer encoder to jointly resolve dependencies across both temporal patches and distinct sensor channels.

\paragraph{Grid Self-Attention Encoder}
We apply a Transformer encoder \citep{vaswani2017attention} to the flattened token sequence
\begin{equation}
H \coloneqq \text{Encoder}(T) \in \mathbb{R}^{(D_{\text{aug}} M_{\mathbf{p}})\times d}.
\end{equation}
Mapping the output back to the original grid structure recovers a representation $H_{c,m}\in\mathbb{R}^{d}$ specific to each channel and patch. Since our self-attention spans the full grid, every token effectively possesses a global receptive field. This allows the model to aggregate context from any channel and any temporal patch simultaneously. The result is a architecture that captures rich cross-sensor interactions and long-range temporal reasoning, all while maintaining computational efficiency by operating on a reduced temporal resolution ($M_{\mathbf{p}} \ll L$).

\subsection{Channel-Shared Horizon Projection}
We adopt a channel shared decoding strategy to synthesize the forecast from the grid representation. The encoder output $H$ preserves the distinct identity of each variable while encoding their temporal dependencies. To generate the prediction we first flatten the patch dimension for each channel independently. This operation collapses the local temporal tokens into a unified channel vector
\begin{equation}
v_c = \mathbf{vec}([H_{c,1}, \dots, H_{c,M_{\mathbf{p}}}]) \in \mathbb{R}^{M_{\mathbf{p}}d}
\end{equation}
where $c$ indexes both the raw sensor inputs and the geometric invariants. We then apply a linear projection $W_{\text{head}} \in \mathbb{R}^{L_{\text{pred}} \times (M_{\mathbf{p}}d)}$ to map these latent temporal features directly to the forecasting horizon
\begin{equation}
\hat{y}_c = W_{\text{head}} v_c + b_{\text{head}} \in \mathbb{R}^{L_{\text{pred}} \times 1}
\end{equation}
Crucially we share $W_{\text{head}}$ and $b_{\text{head}}$ across all $D_{\text{aug}}$ channels. This design leverages the variate based tokenization philosophy where the model learns universal temporal dynamics applicable to both magnetic field components and scalar telemetry. It acts as a regularization mechanism that prevents overfitting to channel specific noise.

\begin{table*}[!htbp]
    \centering
{\fontsize{6.0}{8.0}\selectfont
\begin{tabular}{ccccccccccccccccc}
\hline
\multicolumn{17}{l}{\textbf{Panel A.} Lookback window $L=30$ (Avg. over prediction lengths $L_{\text{pred}} \in$ \{60, 120\})}                                                                                                                                                                                                                                                                                                                                                                                                                      \\
Methods                       & \multicolumn{2}{c}{NavFormer}                                                   & \multicolumn{2}{c}{PatchTST}                                                    & \multicolumn{2}{c}{PAttn}                                & \multicolumn{2}{c}{TimeFilter} & \multicolumn{2}{c}{iTransformer}                         & \multicolumn{2}{c}{TimesNet} & \multicolumn{2}{c}{DLinear} & \multicolumn{2}{c}{LTC-CFC} \\ \hline
Metric                        & MAE                                    & RMSE                                   & MAE                                    & RMSE                                   & MAE                                    & RMSE            & MAE                & RMSE      & MAE             & RMSE                                   & MAE               & RMSE     & MAE          & RMSE         & MAE          & RMSE        \\ \hline
NV2                           & {\color[HTML]{FE0000} \textbf{0.0210}} & {\color[HTML]{FE0000} \textbf{0.0595}} & \textbf{0.0226}                        & {\color[HTML]{3531FF} \textbf{0.0632}} & 0.0225                                 & \textbf{0.0636} & 0.0225             & 0.0657    & 0.0237          & 0.0661                                 & 0.0287            & 0.0747   & 0.0737       & 0.1194       & 0.2883       & 0.3769      \\
NV3                           & {\color[HTML]{3531FF} \textbf{0.0351}} & {\color[HTML]{3531FF} \textbf{0.0802}} & {\color[HTML]{FE0000} \textbf{0.0349}} & {\color[HTML]{FE0000} \textbf{0.0792}} & \textbf{0.0369}                        & \textbf{0.0816} & 0.0403             & 0.0965    & 0.0380          & 0.0826                                 & 0.0486            & 0.0944   & 0.1125       & 0.1637       & 0.2321       & 0.2974      \\
NV4                           & {\color[HTML]{FE0000} \textbf{0.0238}} & {\color[HTML]{FE0000} \textbf{0.0774}} & {\color[HTML]{3531FF} \textbf{0.0261}} & {\color[HTML]{3531FF} \textbf{0.0798}} & {\color[HTML]{000000} \textbf{0.0354}} & \textbf{0.0908} & 0.0439             & 0.1076    & 0.0516          & 0.1060                                 & 0.0607            & 0.1214   & 0.1299       & 0.1825       & 0.3507       & 0.4428      \\
NV6                           & {\color[HTML]{FE0000} \textbf{0.1216}} & {\color[HTML]{FE0000} \textbf{0.2683}} & {\color[HTML]{3531FF} \textbf{0.1336}} & {\color[HTML]{3531FF} \textbf{0.2737}} & \textbf{0.1480}                        & \textbf{0.2950} & 0.1488             & 0.2970    & 0.1592          & 0.3061                                 & 0.1913            & 0.3492   & 0.3145       & 0.4871       & 0.3591       & 0.4387      \\
NV7                           & {\color[HTML]{FE0000} \textbf{0.0520}} & {\color[HTML]{FE0000} \textbf{0.1005}} & {\color[HTML]{3531FF} \textbf{0.0532}} & \textbf{0.1040}                        & 0.0575                                 & 0.1070          & 0.0607             & 0.1181    & 0.0670          & 0.1135                                 & \textbf{0.0567}   & 0.1025   & 0.1238       & 0.1727       & 0.3526       & 0.4256      \\ \hline
\multicolumn{1}{l}{Avg. Rank} & 1.3                                    & 1.4                                    & 2.2                                    & 2.0                                    & 3.1                                    & 3.0             & 3.9                & 3.9       & 4.5             & 3.9                                    & 6.0               & 6.0      & 7.0          & 7.0          & 8.0          & 8.0         \\ \hline
\multicolumn{1}{l}{}          &                                        &                                        &                                        &                                        &                                        &                 &                    &           &                 &                                        &                   &          &              &              &              &             \\
\multicolumn{1}{l}{}          &                                        &                                        &                                        &                                        &                                        &                 &                    &           &                 &                                        &                   &          &              &              &              &             \\ \hline
\multicolumn{17}{l}{\textbf{Panel B.} Lookback window $L=60$ (Avg. over prediction lengths $L_{\text{pred}} \in $ \{60, 120\})}                                                                                                                                                                                                                                                                                                                                                                                                                      \\
Methods                       & \multicolumn{2}{c}{NavFormer}                                                   & \multicolumn{2}{c}{PatchTST}                                                    & \multicolumn{2}{c}{PAttn}                                & \multicolumn{2}{c}{TimeFilter} & \multicolumn{2}{c}{iTransformer}                         & \multicolumn{2}{c}{TimesNet} & \multicolumn{2}{c}{DLinear} & \multicolumn{2}{c}{LTC-CFC} \\ \hline
Metric                        & MAE                                    & RMSE                                   & MAE                                    & RMSE                                   & MAE                                    & RMSE            & MAE                & RMSE      & MAE             & RMSE                                   & MAE               & RMSE     & MAE          & RMSE         & MAE          & RMSE        \\ \hline
NV2                           & {\color[HTML]{FE0000} \textbf{0.0215}} & {\color[HTML]{FE0000} \textbf{0.0637}} & 0.0252                                 & \textbf{0.0658}                        & 0.0228                                 & 0.0647          & 0.0239             & 0.0683    & \textbf{0.0233} & 0.0681                                 & 0.0315            & 0.0773   & 0.0608       & 0.1025       & 0.2848       & 0.3783      \\
NV3                           & {\color[HTML]{FE0000} \textbf{0.0404}} & \textbf{0.0916}                        & {\color[HTML]{3531FF} \textbf{0.0408}} & {\color[HTML]{FE0000} \textbf{0.0869}} & \textbf{0.0409}                        & \textbf{0.0916} & 0.0442             & 0.0996    & 0.0412          & {\color[HTML]{3531FF} \textbf{0.0894}} & 0.0522            & 0.1021   & 0.1044       & 0.1544       & 0.2231       & 0.2905      \\
NV4                           & {\color[HTML]{FE0000} \textbf{0.0251}} & {\color[HTML]{3531FF} \textbf{0.0836}} & {\color[HTML]{3531FF} \textbf{0.0287}} & {\color[HTML]{FE0000} \textbf{0.0804}} & 0.0327                                 & \textbf{0.0881} & \textbf{0.0307}    & 0.0891    & 0.0462          & 0.1015                                 & 0.0586            & 0.1226   & 0.1144       & 0.1668       & 0.3528       & 0.4446      \\
NV6                           & {\color[HTML]{FE0000} \textbf{0.1365}} & {\color[HTML]{3531FF} \textbf{0.2889}} & {\color[HTML]{3531FF} \textbf{0.1366}} & {\color[HTML]{FE0000} \textbf{0.2807}} & \textbf{0.1398}                        & \textbf{0.2948} & 0.1450             & 0.2961    & 0.1602          & 0.3130                                 & 0.2036            & 0.3753   & 0.3551       & 0.5404       & 0.3591       & 0.4383      \\
NV7                           & {\color[HTML]{3531FF} \textbf{0.0578}} & {\color[HTML]{3531FF} \textbf{0.1112}} & {\color[HTML]{FE0000} \textbf{0.0574}} & \textbf{0.1121}                        & 0.0633                                 & 0.1190          & \textbf{0.0611}    & 0.1166    & \textbf{0.0611} & {\color[HTML]{FE0000} \textbf{0.1098}} & 0.0782            & 0.1333   & 0.1047       & 0.1537       & 0.3565       & 0.4323      \\ \hline
Avg. Rank                     & 1.5                                    & 2.5                                    & 2.7                                    & 1.9                                    & 3.4                                    & 2.8             & 4.6                & 4.2       & 3.8             & 3.7                                    & 5.8               & 5.9      & 7.0          & 7.0          & 8.0          & 8.0         \\ \hline
\end{tabular}}
        \caption{Forecasting performance comparison on the IGRF dataset. We report the Mean Absolute Error (MAE) and Root Mean Squared Error (RMSE) averaged over prediction lengths $T \in \{60, 120\}$. \textbf{Panel A} presents results with a lookback window of $L=30$, while \textbf{Panel B} uses a lookback window of $L=60$. The best results are highlighted in \textcolor[HTML]{FE0000}{red}, the second-best in \textcolor[HTML]{3531FF}{blue} and third-best in \textbf{bold}, respectively. Full results for each specific prediction length are provided in the Appendix ~\ref{app:full_results}.}
    \label{main_results}
    \vspace{-18pt}
\end{table*}

\section{Experiment} 
We evaluate the proposed NavFormer on the task of forecasting the IGRF total intensity from rotating vector magnetometer data. We compare the method against state of the art time series forecasting models. We analyze performance in standard, few shot, and zero shot settings. We also analyze the contribution of the geometric components through ablation and spectral stability checks.

\subsection{Implementation details}
\paragraph{Dataset} The magnetometer dataset\citep{gnadt2020signal} contains multiple flights of a fixed wing aircraft. The platform carries several vector fluxgate sensors and scalar magnetometers that record the local geomagnetic field together with auxiliary telemetry.  Scalar magnetometer is mounted at the end of the tail stinger and serves as a reference IGRF total intensity. Flights NV2, NV3, NV4, NV6, and NV7 cover compensation maneuvers, free flight segments, and regional survey lines over Eastern Ontario and nearby regions. The raw recordings form multivariate time series of vector triads and scalar channels with long temporal context. For each flight we split the time axis into contiguous train, validation, and test blocks with a 60\%, 20\%, and 20\%, then form windows within each block. Details of the dataset description provided in Appendix ~\ref{app:dataset}.

\paragraph{Baseline Models} We compare NavFormer with patch based and attention based forecasters that represent strong modern sequence modeling baselines. We include PatchTST \citep{nie2022time}, iTransformer \citep{liu2023itransformer}, and PAttn \citep{tan2024language}. We also include TimesNet \citep{wu2022timesnet} and the linear model DLinear \citep{zeng2023transformers}. We report results for two additional baselines, TimeFilter \citep{hu2025timefilter} and LTC-CFC \citep{nerrise2024physics}. For each baseline, we tune hyperparameters on the validation split and select the checkpoint with the lowest validation error before reporting test performance. Details of the parameter search space are provided in Appendix ~\ref{appendix:hyper}.

\paragraph{Task and metrics} Models take an input window of length $L \in \{30,60\}$ and forecast the IGRF total intensity over horizons $L_{\text{pred}} \in \{60,120\}$.
We report mean absolute error and root mean squared error. We average results over the two horizons in the main tables.


\subsection{Main forecasting performance}
\paragraph{Results} Table~\ref{main_results} compares NavFormer with baselines under two context lengths. With $L=30$, NavFormer attains the lowest MAE on four out of five flights and the lowest RMSE on four out of five flights. The MAE gains over the strongest baseline reach $8.8\%$ on NV4 and $9.0\%$ on NV6, relative to PatchTST. NV4 and NV6 include stronger attitude variation, which induces pronounced channel drift when triads are expressed in a rotating sensor frame. NavFormer reduces this drift by using a Gram defined canonical frame and a state conditioned SPD spectral scaling that stabilizes triad statistics across orientations. With $L=60$, NavFormer again achieves the lowest MAE on four out of five flights. The benefit strengthens on NV4 where NavFormer improves MAE by $12.5\%$ relative to PatchTST. These results match the method design. The invariant scalar pathway provides a stable summary of the physical state, while the canonical SPD modulation and FiLM conditioning correct frame dependent variation in the triads and patch tokens. In contrast, baselines process raw triad components as fixed channels. They must infer rotation robustness from data, which requires coverage over sensor orientations and can waste model capacity on coordinate alignment. The poor performance of DLinear and LTC-CFC in Table~\ref{main_results} is consistent with this failure mode, since these models have limited ability to model nonlinear interactions among rotating vector channels.

\subsection{Generalization under limited or shifted supervision}
\paragraph{Setting} We evaluate generalization under two protocols. In the few shot setting, we train each model on only $5\%$ of the training windows from the target flight while keeping the same validation and test splits. In the zero shot transfer setting, we train on one source flight and evaluate on the remaining flights without fine tuning.

\subsubsection{Few shot forecasting}
\paragraph{Results} Table~\ref{main_fewshot} reports performance when training data is limited. NavFormer attains the lowest MAE on every flight and under both lookback lengths. With $L=30$, NavFormer reduces the average MAE from $0.0996$ for PatchTST to $0.0843$, which corresponds to a $15.38\%$ reduction. With $L=60$, the reduction increases from $0.0987$ to $0.0809$, which corresponds to $18.05\%$. The largest gap occurs on NV4. For $L=60$, NavFormer attains MAE $0.0484$, while PatchTST attains $0.0805$, which corresponds to a $39.88\%$ reduction. These results suggest improved sample efficiency. The rotation invariant scalars provide supervision signals that remain stable across sensor frames. The canonical SPD modulation then maps the vector triads to a representation that varies less with orientation. Baselines must learn this normalization from limited examples, which increases error when the training subset does not cover the orientation distribution of the test windows.

\begin{table}[!htbp]
    \centering
{\fontsize{6.5}{8.0}\selectfont
\begin{tabular}{cccccc}
\hline
\multicolumn{6}{l}{Panel A. Lookback window $L=30$ (Avg. over prediction lengths $L_{\text{pred}} \in$ \{60, 120\})}                                                                                                                                                                  \\ \hline
Methods              & NavFormer                              & PatchTST                               & PAttn                & TimeFilter                             & iTransformer         \\ \hline
NV2                  & {\color[HTML]{FE0000} \textbf{0.0326}} & \textbf{0.0381}                        & 0.0379               & {\color[HTML]{3531FF} \textbf{0.0341}} & 0.0369               \\
NV3                  & {\color[HTML]{FE0000} \textbf{0.0500}} & {\color[HTML]{3531FF} \textbf{0.0554}} & \textbf{0.0625}      & 0.0648                                 & 0.0653               \\
NV4                  & {\color[HTML]{FE0000} \textbf{0.0604}} & {\color[HTML]{3531FF} \textbf{0.0782}} & 0.0833               & \textbf{0.0809}                        & 0.0852               \\
NV6                  & {\color[HTML]{FE0000} \textbf{0.2061}} & \textbf{0.2472}                        & 0.2624               & {\color[HTML]{3531FF} \textbf{0.2411}} & 0.2638               \\
NV7                  & {\color[HTML]{FE0000} \textbf{0.0724}} & {\color[HTML]{3531FF} \textbf{0.0792}} & 0.0858               & \textbf{0.0847}                        & 0.0872               \\ \hline
\multicolumn{1}{l}{} & \multicolumn{1}{l}{}                   & \multicolumn{1}{l}{}                   & \multicolumn{1}{l}{} & \multicolumn{1}{l}{}                   & \multicolumn{1}{l}{} \\ \hline
\multicolumn{6}{l}{Panel B. Lookback window $L=60$ (Avg. over prediction lengths $L_{\text{pred}} \in $ \{60, 120\})}                                                                                                                                                                  \\ \hline
Methods              & NavFormer                              & PatchTST                               & PAttn                & TimeFilter                             & iTransformer         \\ \hline
NV2                  & {\color[HTML]{FE0000} \textbf{0.0282}} & \textbf{0.0396}                        & 0.0380               & {\color[HTML]{3166FF} \textbf{0.0362}} & 0.0415               \\
NV3                  & {\color[HTML]{FE0000} \textbf{0.0506}} & {\color[HTML]{3166FF} \textbf{0.0566}} & \textbf{0.0638}      & 0.0711                                 & 0.0745               \\
NV4                  & {\color[HTML]{FE0000} \textbf{0.0484}} & {\color[HTML]{3166FF} \textbf{0.0805}} & \textbf{0.0877}      & 0.0916                                 & 0.1001               \\
NV6                  & {\color[HTML]{FE0000} \textbf{0.1991}} & {\color[HTML]{3166FF} \textbf{0.2353}} & 0.2477               & \textbf{0.2435}                        & 0.3062               \\
NV7                  & {\color[HTML]{FE0000} \textbf{0.0782}} & {\color[HTML]{3166FF} \textbf{0.0816}} & \textbf{0.0852}      & 0.0920                                 & 0.1016               \\ \hline
\end{tabular}}
    \caption{Few shot forecasting. We train each model with a 5\% subset of the training windows of the target flight and we keep the same validation and test splits. The best, second-best, and third-best results for each metric are highlighted in \textcolor[HTML]{FE0000}{red}, \textcolor[HTML]{3531FF}{blue}, and \textbf{bold}, respectively.}
    \label{main_fewshot}
    \vspace{-23pt}
\end{table}
\subsubsection{Zero shot transfer}
\paragraph{Results} Table~\ref{main_zeroshot} measures generalization across flights whose maneuver profiles and orientation statistics differ. NavFormer attains the lowest MAE in nine of ten settings and it achieves the best average MAE in both panels. With $L=30$, the average MAE decreases from $0.0553$ for PatchTST to $0.0530$, which corresponds to a $4.16\%$ reduction. With $L=60$, the average MAE decreases from $0.0603$ for PAttn to $0.0587$, which corresponds to a $2.65\%$ reduction. The exceptions occur on NV4 with $L=30$ and on NV6 with $L=60$, where a baseline attains slightly lower MAE. The advantage persists under this stronger distribution shift because NavFormer encodes the symmetry of the task. The state summary depends on channels that are invariant to $\mathbf{SO}(3)$ rotations, so the conditioning signals remain comparable across flights. The canonical frame derives from the Gram matrix of the triads within each window, so it adapts to orientation changes without requiring flight specific training.

\begin{table}[!htbp]
    \centering
{\fontsize{6.5}{8.0}\selectfont
\begin{tabular}{cccccc}
\hline
\multicolumn{6}{l}{Panel A. Lookback window $L=30$ (Avg. over prediction lengths $L_{\text{pred}} \in$ \{60, 120\})}                                                                                                                                                                                                      \\ \hline
Methods              & NavFormer                              & PatchTST                               & PAttn                                  & TimeFilter                             & iTransformer                           \\ \hline
NV2                  & {\color[HTML]{FE0000} \textbf{0.0608}} & \textbf{0.0690}                        & {\color[HTML]{3531FF} \textbf{0.0638}} & 0.0673                                 & 0.0692                                 \\
NV3                  & {\color[HTML]{FE0000} \textbf{0.0511}} & {\color[HTML]{3531FF} \textbf{0.0540}} & \textbf{0.0541}                        & 0.0559                                 & 0.0599                                 \\
NV4                  & {\color[HTML]{3531FF} \textbf{0.0596}} & {\color[HTML]{FE0000} \textbf{0.0587}} & \textbf{0.0651}                        & 0.0720                                 & 0.0794                                 \\
NV6                  & {\color[HTML]{FE0000} \textbf{0.0396}} & \textbf{0.0403}                        & {\color[HTML]{3531FF} \textbf{0.0399}} & 0.0421                                 & 0.0439                                 \\
NV7                  & {\color[HTML]{FE0000} \textbf{0.0537}} & {\color[HTML]{3531FF} \textbf{0.0543}} & \textbf{0.0569}                        & 0.0597                                 & 0.0636                                 \\ \hline
\multicolumn{1}{l}{} & \multicolumn{1}{l}{}                   & \multicolumn{1}{l}{}                   & \multicolumn{1}{l}{}                   & \multicolumn{1}{l}{}                   & \multicolumn{1}{l}{}                   \\ \hline
\multicolumn{6}{l}{Panel B. Lookback window $L=60$ (Avg. over prediction lengths $L_{\text{pred}} \in $ \{60, 120\})}                                                                                                                                                                                                      \\ \hline
Methods              & NavFormer                              & PatchTST                               & PAttn                                  & TimeFilter                             & iTransformer                           \\ \hline
NV2                  & {\color[HTML]{FE0000} \textbf{0.0647}} & 0.0795                                 & \textbf{0.0676}                        & 0.0745                                 & {\color[HTML]{3531FF} \textbf{0.0696}} \\
NV3                  & {\color[HTML]{FE0000} \textbf{0.0597}} & \textbf{0.0626}                        & 0.0629                                 & 0.0635                                 & {\color[HTML]{3531FF} \textbf{0.0619}} \\
NV4                  & {\color[HTML]{FE0000} \textbf{0.0667}} & 0.0751                                 & {\color[HTML]{3531FF} \textbf{0.0705}} & \textbf{0.0713}                        & 0.0786                                 \\
NV6                  & {\color[HTML]{3531FF} \textbf{0.0434}} & 0.0485                                 & {\color[HTML]{FE0000} \textbf{0.0409}} & {\color[HTML]{3531FF} \textbf{0.0412}} & \textbf{0.0445}                        \\
NV7                  & {\color[HTML]{FE0000} \textbf{0.0588}} & 0.0677                                 & \textbf{0.0596}                        & {\color[HTML]{3531FF} \textbf{0.0593}} & 0.0658                                 \\ \hline
\end{tabular}}
    \caption{Zero shot forecasting. We train on source flights and evaluate on the held out flight without parameter updates. The best, second-best, and third-best results for each metric are highlighted in \textcolor[HTML]{FE0000}{red}, \textcolor[HTML]{3531FF}{blue}, and \textbf{bold}, respectively.}
    \label{main_zeroshot}
    \vspace{-16pt}
\end{table}
\subsection{Mechanism analysis}

\begin{figure*}[!ht]
\small{
  \centering
  \includegraphics[width=0.90\linewidth]{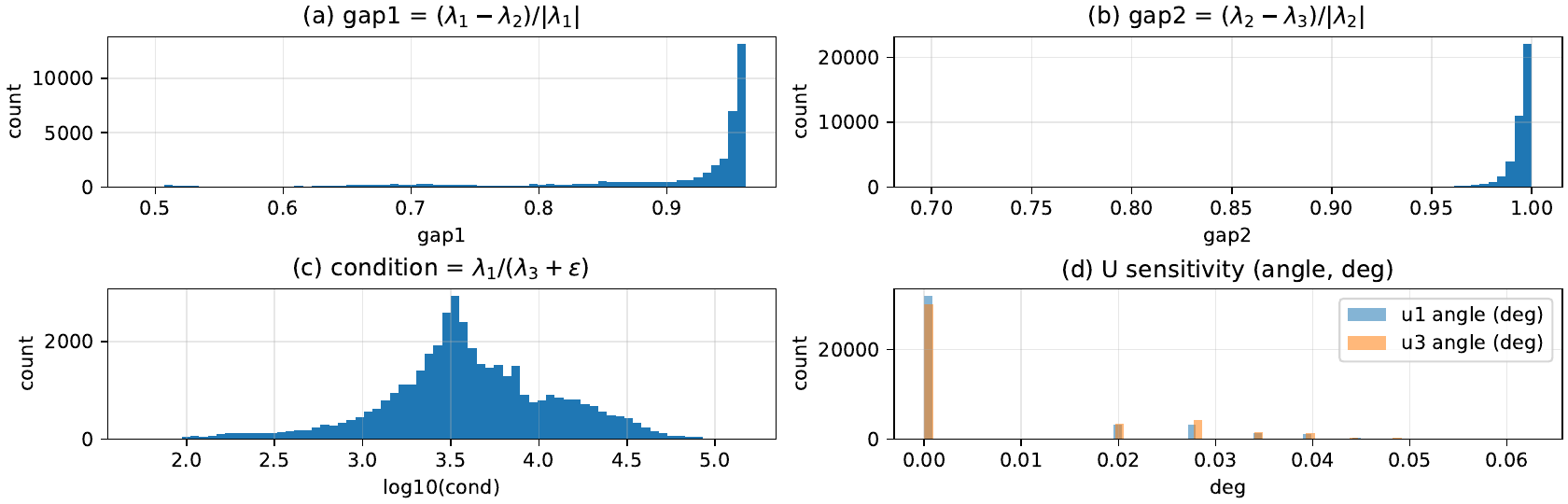}
  \caption{Spectrum and stability statistics of the aggregated Gram matrix $G$ over test windows.}
  \label{gram_stats}
 \vspace{-18pt}}
\end{figure*}

\paragraph{Ablation Studies} 

We ablate four components of NavFormer under both input lengths. Table~\ref{main_abl} reports the relative MAE increase for each ablation, defined as
\begin{equation}
    \Delta_{\%} = 100 \times (\mathrm{MAE}_{\mathrm{abl}} - \mathrm{MAE}_{\mathrm{full}}) / \mathrm{MAE}_{\mathrm{full}}.
\end{equation}
Removing harmonic tokens increases MAE on every flight. With $L=30$, the increase ranges from $3.81$ to $5.85\%$ with a mean of $5.05\%$. With $L=60$, the mean increase is $3.14\%$. This supports that explicit temporal harmonics compensate for limited context.Removing the canonical SPD modulation also degrades performance. With $L=30$, the mean increase is $4.03\%$ and the increase reaches $6.21\%$ on NV6. With $L=60$, the mean increase is $2.67\%$, but NV7 still shows a $5.88\%$ increase. This supports that geometry aware modulation matters most when orientation variability is high.

\begin{table}[!htbp]
    \centering
{\fontsize{6.5}{8.0}\selectfont
\begin{tabular}{lcccc}
\hline
        & \multicolumn{4}{c}{\textbf{Relative Performance Drop ($\%$) $\downarrow$}} \\ \cline{2-5} 
Methods & w/o Harmonic         & w/o SPD        & w/o FiLM        & Only FiLM        \\ \hline
\multicolumn{5}{l}{\textit{\textbf{Panel A. Sequence Length $L=30$}}}                \\ \hline
NV2     & 4.85                 & 3.14           & 2.20            & 2.15             \\
NV3     & 3.81                 & 5.15           & 0.46            & 3.89             \\
NV4     & 5.85                 & 1.83           & 2.54            & 1.75             \\
NV6     & 4.97                 & 6.21           & 6.62            & 2.77             \\
NV7     & 5.76                 & 3.82           & 0.99            & 4.13             \\ \hline
\multicolumn{5}{l}{\textit{\textbf{Panel B. Sequence Length $L=60$}}}                \\ \hline
NV2     & 2.97                 & 0.72           & 1.65            & 2.38             \\
NV3     & 3.96                 & 0.43           & 1.25            & 3.27             \\
NV4     & 2.72                 & 3.55           & 3.27            & 3.67             \\
NV6     & 2.67                 & 2.75           & 4.81            & 5.87             \\
NV7     & 3.40                 & 5.88           & 10.85           & 6.55             \\ \hline
\end{tabular}}
        \caption{Ablation study for NavFormer. Panel A uses input length $L=30$ and Panel B uses input length $L=60$. We report the relative MAE increase in \% with respect to the full model. }
    \label{main_abl}
    \vspace{-18pt}
\end{table}
Removing FiLM has a smaller effect for $L=30$, with a mean increase of $2.56\%$, but it has the largest effect for $L=60$, with a mean increase of $4.37\%$. The increase reaches $10.85\%$ on NV7. Longer context improves the state summary and strengthens the effect of state conditioned token modulation. The Only FiLM variant, which keeps FiLM while removing harmonic tokens and canonical SPD modulation, also underperforms the full model. Its mean increase is $2.94\%$ for $L=30$ and $4.35\%$ for $L=60$. These results show that FiLM alone cannot replace the combination of temporal priors and canonical SPD modulation.

\paragraph{Stability of the canonical frame} 
We examine when the eigenframe of the aggregated Gram matrix $G$ is identifiable and stable over the test distribution.
For each input window we compute eigenvalues $\lambda_{1} \ge \lambda_{2} \ge \lambda_{3}$ and the relative gaps ~\ref{eq:eigen}.
We also compute the condition proxy $\kappa = \lambda_{1}/(\lambda_{3} + \epsilon)$ for a small $\epsilon$. To test stability, we add a small perturbation to the triads, recompute the eigenvectors, and measure the angle between each original eigenvector $u_{i}$ and its perturbed counterpart $\tilde{u}_{i}$. Figure~\ref{gram_stats} shows that both gap statistics concentrate near $1$. This implies strong spectral separation in most windows, which makes the canonical basis close to unique up to sign \citep{davis1970rotation, yu2015useful}. The distribution of $\log_{10}\kappa$ is broad and often large, which indicates anisotropic energy across canonical directions and supports direction wise SPD scaling.

The eigenvector perturbation angles for $u_{1}$ and $u_{3}$ concentrate near $0^\circ$ and remain below about $0.06^\circ$ on almost all windows. This stability implies that small sensor noise does not materially change the canonical axes, so the learned SPD scaling admits a consistent interpretation as axis aligned gain control. A small subset of windows sits in the lower tail of the gap distribution. In those cases the eigenframe can drift within a nearly degenerate subspace. NavFormer retains a stable conditioning signal through the invariant scalar pathway, which does not depend on the eigenframe.

\paragraph{Show Case Analysis} We visualize a representative forecast on NV2 with lookback $L=60$ and horizon $L_{\mathrm{pred}}=120$ in Figure~\ref{figure_showcase}. NavFormer follows the ground truth trajectory through the steep rise near $t \approx 130$ and the subsequent decline, and it preserves the amplitude during the final increase. PatchTST and PAttn capture the global trend but smooth the peak and show a small lag around turning points. iTransformer and DLinear underestimate the mid horizon peak and the late horizon rise. TimesNet produces oscillatory deviations and overshoots, while LTC-CFC exhibits a sustained positive bias over much of the segment. This example aligns with Table~\ref{main_results} and supports the role of the geometric front end in reducing frame-dependent variation.

\begin{figure}[t]
\small{
  \centering
  \includegraphics[width=1\linewidth]{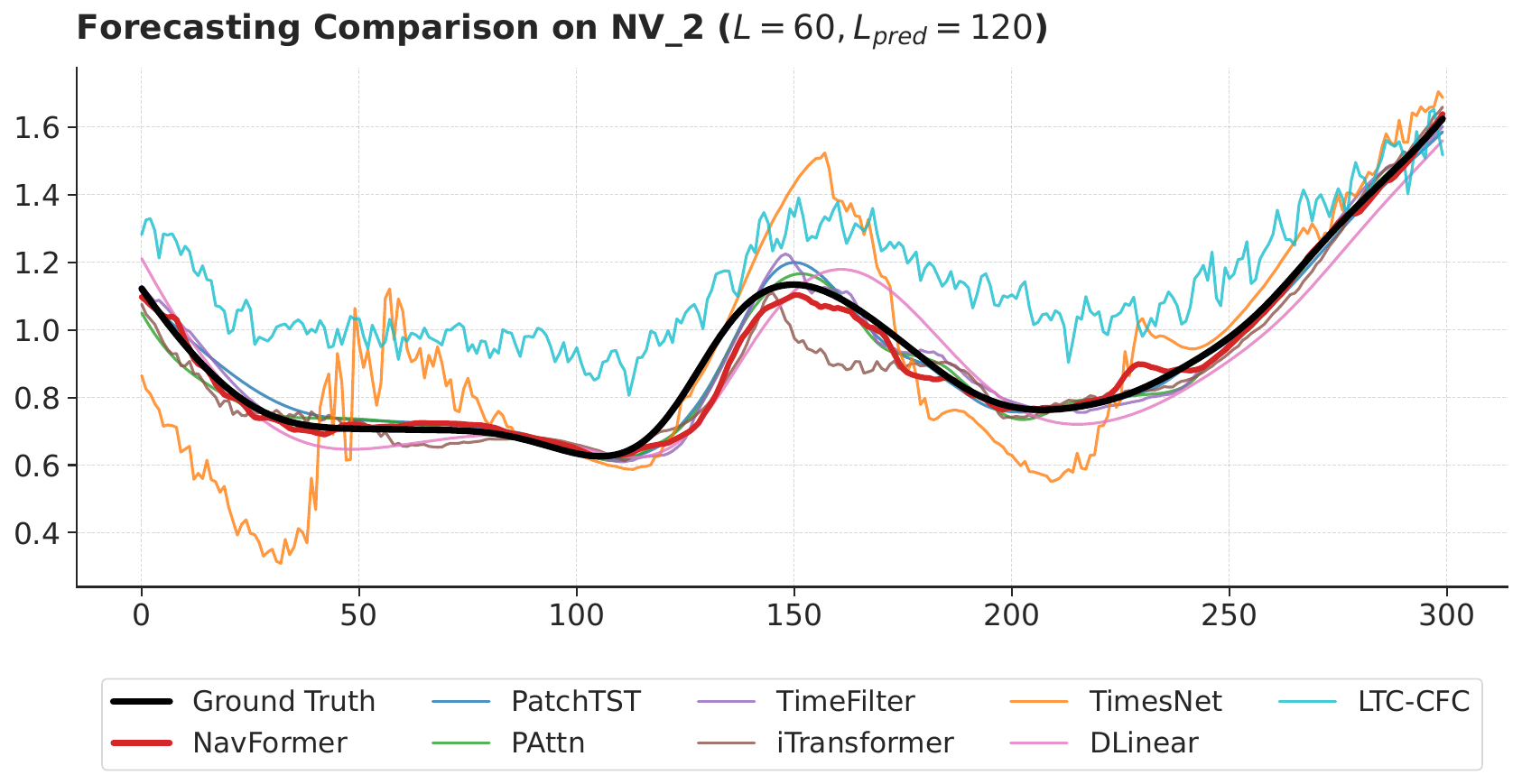}
  \caption{Forecasting comparison on NV2 with lookback $L=60$ and prediction horizon $L_{\text{pred}}=120$.}
  \label{figure_showcase}
 \vspace{-12pt}}
\end{figure}

\section{Conclusion}
NavFormer targets forecasting with rotating vector sensors by aligning window statistics instead of enforcing strict input invariance. The model combines rotation invariant scalar features with a sign invariant canonical symmetric positive definite spectral transform of triad vectors. Experiments on five aircraft flights show the best mean absolute error on most flights and consistent gains over strong transformer baselines. Few shot and zero shot tests show that the geometric front end improves sample efficiency and cross flight transfer.
More broadly, the proposed framework is compatible with sensing pipelines~\cite{lee2026sensor, lee2024dualpd}.
\clearpage

\section{Impact Statement}
This paper studies learning methods for magnetic navigation. The method is intended for benign navigation applications and should be evaluated carefully before deployment in safety critical systems.

\bibliography{example_paper}
\bibliographystyle{icml2026}

\newpage
\appendix
\onecolumn
\section{Dataset Description} \label{app:dataset}
\begin{wraptable}{r}{0.5\textwidth}
    \centering
    \centering
{\fontsize{8.0}{8.5}\selectfont
\begin{tabular}{ccccc}
\hline
Dataset & dim & \begin{tabular}[c]{@{}c@{}}Sequence \\ Length\end{tabular} & \begin{tabular}[c]{@{}c@{}}Prediction\\ Length\end{tabular} & Dataset Size           \\ \hline
NV2     & 26  & 30, 60\                                                 & 60, 120\                                                 & (124548, 41516, 41516) \\
NV3     & 26  & 30, 60\                                                 & 60, 120\                                                 & (96018 , 32006, 32006) \\
NV4     & 26  & 30, 60\                                                 & 60, 120\                                                 & (48844 , 16282, 16282) \\
NV6     & 26  & 30, 60\                                                 & 60, 120\                                                 & (64990 , 21664, 21664) \\
NV7     & 26  & 30, 60\                                                 & 60, 120\                                                 & (68703 , 22901, 22902) \\ \hline
\end{tabular}}
    \caption{Summary of the MagNav flights used in our experiments}
    \label{appendix_dataset}
    \vspace{-12pt} 
\end{wraptable}

We use the public MagNav challenge dataset collected near Ottawa, Ontario, Canada, through a collaboration between the Department of the Air Force MIT AI Accelerator and Sanders Geophysics Ltd. The platform is a Cessna 208B Grand Caravan instrumented with multiple magnetometers and standard flight telemetry. The magnetic suite contains five scalar magnetometers that measure total field intensity and three vector magnetometers that measure tri axial field components. The scalar sensors are optically pumped units, and the vector sensors are fluxgate units. The sensors are installed at distinct locations across the airframe, which creates meaningful variation in interference strength and structure. One scalar sensor is mounted on a boom behind the cabin, which reduces coupling to onboard magnetic sources and serves as the reference signal for supervision. The remaining scalar sensors are placed inside the cabin at different longitudinal positions and heights, which makes them strongly affected by airframe fields and electrical activity.

\begin{figure}[ht]
    \centering
    \includegraphics[width=0.60\linewidth]{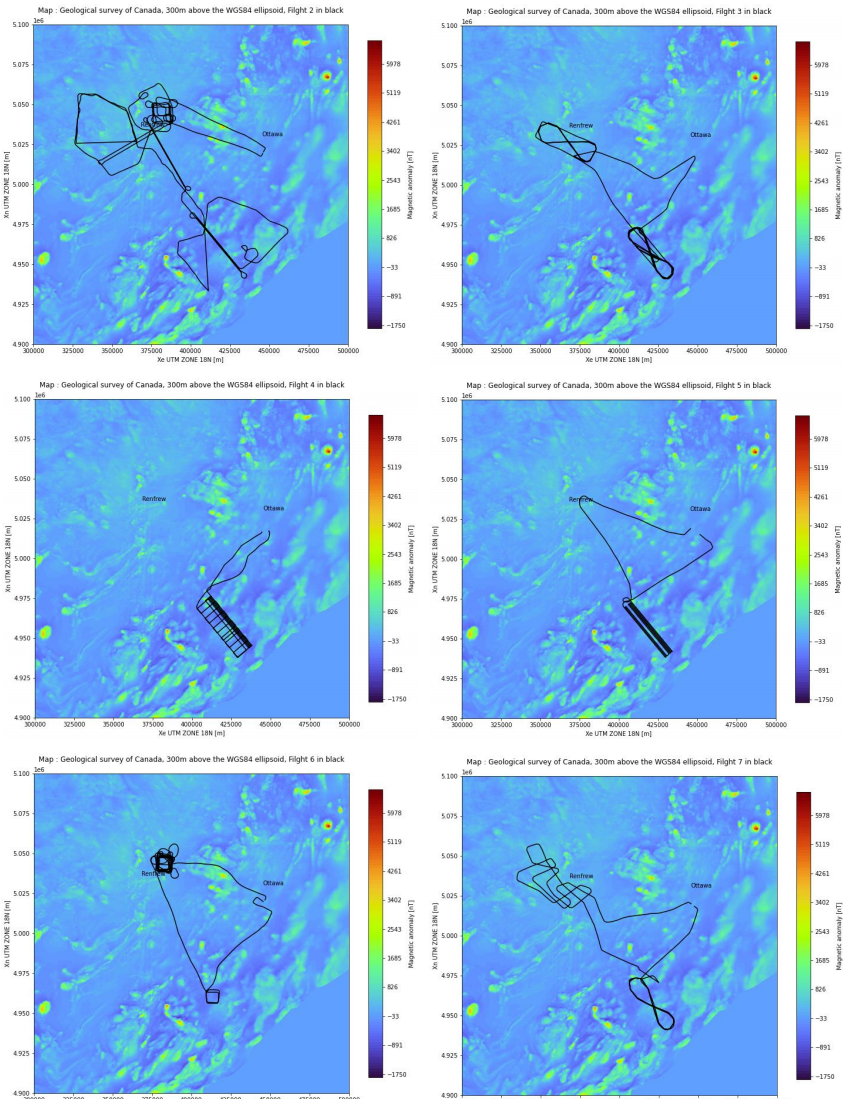}
    \caption{Flight path. Adapted from the MagNav repository (\url{https://github.com/Naatyu/MagNav}).}    \label{fig:flight_path}
\end{figure}

Table~\ref{appendix_dataset} summarizes the six flights, their segmentation, and their sample counts at 10 Hz. We partition the data to prevent spatial leakage where the aircraft revisits specific geographic locations. The training set comprises flights 1002 1004 and 1006 while flights 1003 and 1007 serve as unseen test cases. Flight 1003 features figure eight patterns and Flight 1007 includes complex maneuvering which challenges the generalization capabilities of the models. We apply the International Geomagnetic Reference Field model to remove the core field component from the raw readings. Ground stations provide corrections for diurnal temporal variations which further isolates the local crustal magnetic field. The total recording time exceeds 20 hours with flight altitudes ranging between 400 meters and 800 meters. The geographic region exhibits low crustal field variation which simplifies the isolation of aircraft induced noise for benchmarking purposes. All flights occurred during the day to ensure consistent environmental conditions. See flight path visualization Figure ~\ref{fig:flight_path}.

\section{Mathematical derivations} \label{appendix_math}
\begin{proposition}[Spectral Alignment of Canonical SPD] 
Let $G = U \Lambda U^\top$ be the aggregated Gram matrix and $M(\mathbf{s}) =U \Sigma(\mathbf{s}) U^\top$ the canonical SPD transform. Define the Gram matrix of the modulated triads as $\tilde{G}(\mathbf{s})\coloneqq\sum_{t=1}^L ( \tilde{B}_{t} \tilde{B}_{t}^\top + \tilde{C}_{t} \tilde{C}_{t}^\top + \tilde{D}_{t} \tilde{D}_{t}^\top )$.
Then $\tilde{G}(\mathbf{s}) = U \Sigma(\mathbf{s}) \Lambda \Sigma(\mathbf{s}) U^\top$. In particular, $\tilde{G}(\mathbf{s})$ has eigenvectors $U$ and eigenvalues$\sigma_i(\mathbf{s})^2 \lambda_i$ for $i=1,2,3$.\end{proposition}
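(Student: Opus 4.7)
The plan is to reduce the statement to a short matrix algebra calculation by exploiting two facts: the Gram of the modulated triads is a conjugation of the original Gram by $M(\mathbf{s})$, and $M(\mathbf{s})$ shares its eigenbasis with $G$ so the conjugation acts diagonally in that basis. No continuity or analytic machinery is required, since everything is finite dimensional and the eigenbasis $U$ is fixed before applying the transform.

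First I would rewrite the summand of $\tilde{G}(\mathbf{s})$. Since $\tilde{B}_t = M(\mathbf{s})B_t$ and $M(\mathbf{s})$ is symmetric (being an SPD matrix), we have $\tilde{B}_t\tilde{B}_t^\top = M(\mathbf{s})\,B_tB_t^\top\,M(\mathbf{s})$, and identical identities hold for the $C_t$ and $D_t$ terms. Next I would pull the $M(\mathbf{s})$ factors out of the sum, which is legal because they do not depend on $t$. This yields
\begin{equation}
\tilde{G}(\mathbf{s}) = M(\mathbf{s})\,\Big(\sum_{t=1}^L B_tB_t^\top + C_tC_t^\top + D_tD_t^\top\Big)\,M(\mathbf{s}) = M(\mathbf{s})\,G\,M(\mathbf{s}).
\end{equation}

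Then I would substitute the spectral forms $G = U\Lambda U^\top$ and $M(\mathbf{s}) = U\Sigma(\mathbf{s})U^\top$, collapse the inner product $U^\top U = I$ (using orthonormality of $U$ from the eigendecomposition), and obtain $\tilde{G}(\mathbf{s}) = U\,\Sigma(\mathbf{s})\Lambda\Sigma(\mathbf{s})\,U^\top$. Because $\Sigma(\mathbf{s})$ and $\Lambda$ are both diagonal they commute, so $\Sigma(\mathbf{s})\Lambda\Sigma(\mathbf{s}) = \Sigma(\mathbf{s})^2\Lambda$ is diagonal with entries $\sigma_i(\mathbf{s})^2\lambda_i$; reading off this diagonal identifies the eigenvectors as the columns of $U$ and the eigenvalues as $\sigma_i(\mathbf{s})^2\lambda_i$ for $i=1,2,3$.

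There is no substantive obstacle here: the result is a direct algebraic consequence of the definitions together with the fact that $M(\mathbf{s})$ and $G$ are simultaneously diagonalizable by construction. The only points requiring minimal care are noting that $M(\mathbf{s})$ is symmetric so $M(\mathbf{s})^\top = M(\mathbf{s})$, and that the sign ambiguity of the columns of $U$ does not affect $\tilde{G}(\mathbf{s})$, since $(-u_i)(-u_i)^\top = u_iu_i^\top$, matching the sign invariance remark already made in Section~\ref{subsection:SPD_FiLM}.
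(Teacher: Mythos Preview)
Your proposal is correct and follows essentially the same route as the paper: compute $\tilde G(\mathbf{s}) = M(\mathbf{s})\,G\,M(\mathbf{s})^\top$, substitute the shared eigendecompositions, and collapse $U^\top U = I$ to read off $U\Sigma\Lambda\Sigma U^\top$. The only cosmetic difference is that you invoke $M(\mathbf{s})^\top = M(\mathbf{s})$ to write the conjugation without a transpose, and you add the harmless observation that diagonal matrices commute; the argument is otherwise identical.
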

\begin{proof}
Let $M \coloneqq M(\mathbf{s})$.
Since $\tilde B_t = M B_t$, we have
\begin{equation}
\tilde B_t \tilde B_t^\top = (M B_t)(M B_t)^\top = M (B_t B_t^\top) M^\top,    
\end{equation}
and similarly for $\tilde C_t, \tilde D_t$. Hence
\begin{equation}
\tilde G(\mathbf{s}) = \sum_{t=1}^L M (B_t B_t^\top + C_t C_t^\top + D_t D_t^\top) M^\top
= M G M^\top.
\end{equation}
Using $G=U\Lambda U^\top$ and $M=U\Sigma(\mathbf{s})U^\top$ with $U^\top U=I$ and $\Sigma$ diagonal, 
\begin{align}
\tilde G(\mathbf{s}) &= (U\Sigma U^\top)(U\Lambda U^\top)(U\Sigma U^\top) \\
&= U \Sigma \Lambda \Sigma U^\top.
\end{align}
Therefore $U^\top \tilde G(\mathbf{s}) U = \Sigma \Lambda \Sigma
= \mathrm{diag}(\sigma_1^2\lambda_1,\sigma_2^2\lambda_2,\sigma_3^2\lambda_3)$,
so $u_i$ is an eigenvector of $\tilde G(\mathbf{s})$ with eigenvalue $\sigma_i^2\lambda_i$.
\end{proof}

\section{Hyperparameter} \label{appendix:hyper}
We tuned hyperparameters via grid search to ensure a fair comparison across all baselines. The search space for general time series forecasting models covered a broad range of capacities. We varied the model dimension $d_{\text{model}}$ within the set $\{64, 128, 256, 512\}$. We selected feed-forward dimensions $d_{\text{ff}}$ from $\{64, 128, 256\}$. We tested encoder depths of 1, 2, 3, and 4 layers. The attention mechanism used either 2 or 4 heads. We adapted the search space for TimesNet to accommodate its specific structural requirements. We swept $d_{\text{model}}$ values of $\{8, 16, 32, 64\}$ and $d_{\text{ff}}$ values of $\{8, 16, 32\}$. We retained the same ranges for layers and heads. We additionally tuned the top-$k$ parameter from $\{3, 5, 7\}$. We chose the final configuration for each model based on the lowest validation error. See the hyperparmeter Table ~\ref{appendix_dataset}.

\begin{table}[!htbp]
    \centering
{\fontsize{8.0}{8.5}\selectfont
\begin{tabular}{ll}
\hline
\multicolumn{2}{l}{\textit{\textbf{Panel A. General Time Series Forecasting Models}}} \\
\textbf{Parameter}                          & \textbf{Values}                         \\ \hline
D\_MODELS                                   & 64, 128, 256, 512                            \\
D\_FF                                      & 64, 128, 256                           \\
E\_LAYERS\_LIST                             & 1, 2, 3, 4                                    \\
N\_HEADS\_LIST                              & 2, 4                                 \\ \hline
                                            &                                         \\ \hline
\multicolumn{2}{l}{\textit{\textbf{Panel B. TimesNet}}}                               \\
\textbf{Parameter}                          & \textbf{Values}                         \\ \hline
D\_MODELS                                   & 8, 16, 32, 64                               \\
D\_FF                                      & 8, 16, 32                               \\
E\_LAYERS\_LIST                             & 1, 2, 3, 4                                    \\
N\_HEADS\_LIST                              & 2, 4                                 \\
TOP\_K                                      & 3, 5, 7                                 \\ \hline
                                            &                                         \\ \hline

\end{tabular}}
        \caption{Hyperparameter grid search space}
    \label{appendix_dataset}
    \vspace{-12pt}
\end{table}

\clearpage

\section{Complete results for IGRF forecasting} \label{app:full_results}
\begin{table*}[!htbp]
    \centering
{\fontsize{6.0}{8.0}\selectfont
\begin{tabular}{cccccccccc}
\hline
\multicolumn{10}{l}{Panel A. Lookback window $L=30$ } \\
\multicolumn{2}{c}{Methods} & NavFormer & PatchTST & PAttn & TimeFilter & iTransformer & TimesNet & DLinear & LTC-CFC \\ \hline
\multicolumn{2}{c}{Metric} & MAE & MAE & MAE & MAE & MAE & MAE & MAE & MAE \\ \hline
 & \multicolumn{1}{c|}{60} & {\color[HTML]{3531FF} \textbf{0.0068 $\pm$ 0.0006}} & {\color[HTML]{FE0000} \textbf{0.0062 $\pm$ 0.0000}} & 0.0085 $\pm$ 0.0000 & \textbf{0.0082 $\pm$ 0.0007} & 0.0087 $\pm$ 0.0011 & 0.0116 $\pm$ 0.0009 & 0.0521 $\pm$ 0.0023 & 0.2780 $\pm$ 0.0190 \\
\multirow{-2}{*}{NV2} & \multicolumn{1}{c|}{120} & {\color[HTML]{FE0000} \textbf{0.0352 $\pm$ 0.0001}} & 0.0391 $\pm$ 0.0009 & {\color[HTML]{3531FF} \textbf{0.0366 $\pm$ 0.0004}} & \textbf{0.0368 $\pm$ 0.0016} & 0.0387 $\pm$ 0.0007 & 0.0457 $\pm$ 0.0005 & 0.0954 $\pm$ 0.0010 & 0.2987 $\pm$ 0.0361 \\
 & \multicolumn{1}{c|}{60} & {\color[HTML]{3531FF} \textbf{0.0094 $\pm$ 0.0005}} & {\color[HTML]{FE0000} \textbf{0.0088 $\pm$ 0.0001}} & \textbf{0.0107 $\pm$ 0.0001} & 0.0111 $\pm$ 0.0002 & 0.0118 $\pm$ 0.0001 & 0.0188 $\pm$ 0.0002 & 0.0713 $\pm$ 0.0040 & 0.2434 $\pm$ 0.0490 \\
\multirow{-2}{*}{NV3} & \multicolumn{1}{c|}{120} & {\color[HTML]{FE0000} \textbf{0.0607 $\pm$ 0.0011}} & {\color[HTML]{3531FF} \textbf{0.0609 $\pm$ 0.0004}} & \textbf{0.0630 $\pm$ 0.0008} & 0.0695 $\pm$ 0.0010 & 0.0643 $\pm$ 0.0004 & 0.0784 $\pm$ 0.0020 & 0.1537 $\pm$ 0.0023 & 0.2208 $\pm$ 0.0192 \\
 & \multicolumn{1}{c|}{60} & {\color[HTML]{FE0000} \textbf{0.0090 $\pm$ 0.0002}} & {\color[HTML]{3531FF} \textbf{0.0095 $\pm$ 0.0003}} & \textbf{0.0104 $\pm$ 0.0002} & 0.0107 $\pm$ 0.0005 & 0.0143 $\pm$ 0.0011 & 0.0248 $\pm$ 0.0015 & 0.0988 $\pm$ 0.0051 & 0.3485 $\pm$ 0.0667 \\
\multirow{-2}{*}{NV4} & \multicolumn{1}{c|}{120} & {\color[HTML]{FE0000} \textbf{0.0386 $\pm$ 0.0005}} & {\color[HTML]{3531FF} \textbf{0.0427 $\pm$ 0.0013}} & \textbf{0.0604 $\pm$ 0.0005} & 0.0770 $\pm$ 0.0005 & 0.0889 $\pm$ 0.0010 & 0.0967 $\pm$ 0.0015 & 0.1609 $\pm$ 0.0040 & 0.3528 $\pm$ 0.0470 \\
 & \multicolumn{1}{c|}{60} & {\color[HTML]{FE0000} \textbf{0.0339 $\pm$ 0.0012}} & \textbf{0.0393 $\pm$ 0.0016} & {\color[HTML]{3531FF} \textbf{0.0390 $\pm$ 0.0007}} & 0.0427 $\pm$ 0.0014 & 0.0487 $\pm$ 0.0010 & 0.0725 $\pm$ 0.0008 & 0.1941 $\pm$ 0.0051 & 0.3532 $\pm$ 0.0019 \\
\multirow{-2}{*}{NV6} & \multicolumn{1}{c|}{120} & {\color[HTML]{FE0000} \textbf{0.2094 $\pm$ 0.0081}} & {\color[HTML]{3531FF} \textbf{0.2279 $\pm$ 0.0162}} & 0.2571 $\pm$ 0.0267 & \textbf{0.2548 $\pm$ 0.0077} & 0.2697 $\pm$ 0.0003 & 0.3102 $\pm$ 0.0137 & 0.4350 $\pm$ 0.0021 & 0.3650 $\pm$ 0.0053 \\
 & \multicolumn{1}{c|}{60} & {\color[HTML]{3531FF} \textbf{0.0160 $\pm$ 0.0004}} & {\color[HTML]{FE0000} \textbf{0.0158 $\pm$ 0.0003}} & \textbf{0.0195 $\pm$ 0.0005} & 0.0221 $\pm$ 0.0002 & 0.0228 $\pm$ 0.0008 & 0.0278 $\pm$ 0.0014 & 0.0937 $\pm$ 0.0024 & 0.1281 $\pm$ 0.0027 \\
\multirow{-2}{*}{NV7} & \multicolumn{1}{c|}{120} & {\color[HTML]{FE0000} \textbf{0.0880 $\pm$ 0.0022}} & {\color[HTML]{3531FF} \textbf{0.0907 $\pm$ 0.0050}} & 0.0954 $\pm$ 0.0066 & 0.0993 $\pm$ 0.0037 & \textbf{0.0907 $\pm$ 0.0004} & 0.1063 $\pm$ 0.0025 & 0.1538 $\pm$ 0.0006 & 0.3666 $\pm$ 0.0908 \\ \hline
\multicolumn{1}{l}{} & \multicolumn{1}{l}{} & \multicolumn{1}{l}{} & \multicolumn{1}{l}{} & \multicolumn{1}{l}{} & \multicolumn{1}{l}{} & \multicolumn{1}{l}{} & \multicolumn{1}{l}{} & \multicolumn{1}{l}{} & \multicolumn{1}{l}{} \\ \hline
\multicolumn{10}{l}{Panel B. Lookback window $L=60$ } \\
\multicolumn{2}{c}{Methods} & NavFormer & PatchTST & PAttn & TimeFilter & iTransformer & TimesNet & DLinear & LTC-CFC \\ \hline
\multicolumn{2}{c}{Metric} & MAE & MAE & MAE & MAE & MAE & MAE & MAE & MAE \\ \hline
 & \multicolumn{1}{c|}{60} & {\color[HTML]{FE0000} \textbf{0.0061 $\pm$ 0.0002}} & 0.0085 $\pm$ 0.0003 & \textbf{0.0080 $\pm$ 0.0001} & 0.0089 $\pm$ 0.0001 & {\color[HTML]{3531FF} \textbf{0.0077 $\pm$ 0.0002}} & 0.0123 $\pm$ 0.0006 & 0.0372 $\pm$ 0.0031 & 0.2770 $\pm$ 0.0209 \\
\multirow{-2}{*}{NV2} & \multicolumn{1}{c|}{120} & {\color[HTML]{FE0000} \textbf{0.0369 $\pm$ 0.0001}} & 0.0419 $\pm$ 0.0005 & {\color[HTML]{3531FF} \textbf{0.0377 $\pm$ 0.0001}} & \textbf{0.0388 $\pm$ 0.0002} & 0.0389 $\pm$ 0.0001 & 0.0478 $\pm$ 0.0011 & 0.0843 $\pm$ 0.0023 & 0.2926 $\pm$ 0.0391 \\
 & \multicolumn{1}{c|}{60} & {\color[HTML]{3531FF} \textbf{0.0127 $\pm$ 0.0005}} & \textbf{0.0135 $\pm$ 0.0001} & {\color[HTML]{FE0000} \textbf{0.0127 $\pm$ 0.0001}} & 0.0156 $\pm$ 0.0007 & 0.0154 $\pm$ 0.0001 & 0.0234 $\pm$ 0.0017 & 0.0551 $\pm$ 0.0033 & 0.2256 $\pm$ 0.0215 \\
\multirow{-2}{*}{NV3} & \multicolumn{1}{c|}{120} & {\color[HTML]{3531FF} \textbf{0.0681 $\pm$ 0.0002}} & \textbf{0.0681 $\pm$ 0.0009} & 0.0692 $\pm$ 0.0015 & 0.0729 $\pm$ 0.0010 & {\color[HTML]{FE0000} \textbf{0.0671 $\pm$ 0.0006}} & 0.0809 $\pm$ 0.0006 & 0.1538 $\pm$ 0.0009 & 0.2206 $\pm$ 0.0191 \\
 & \multicolumn{1}{c|}{60} & {\color[HTML]{FE0000} \textbf{0.0106 $\pm$ 0.0005}} & \textbf{0.0120 $\pm$ 0.0004} & 0.0121 $\pm$ 0.0003 & {\color[HTML]{3531FF} \textbf{0.0115 $\pm$ 0.0004}} & 0.0145 $\pm$ 0.0016 & 0.0234 $\pm$ 0.0016 & 0.0788 $\pm$ 0.0013 & 0.3416 $\pm$ 0.0554 \\
\multirow{-2}{*}{NV4} & \multicolumn{1}{c|}{120} & {\color[HTML]{FE0000} \textbf{0.0397 $\pm$ 0.0012}} & {\color[HTML]{3531FF} \textbf{0.0454 $\pm$ 0.0012}} & 0.0533 $\pm$ 0.0044 & \textbf{0.0500 $\pm$ 0.0035} & 0.0779 $\pm$ 0.0007 & 0.0939 $\pm$ 0.0018 & 0.1499 $\pm$ 0.0012 & 0.3640 $\pm$ 0.0534 \\
 & \multicolumn{1}{c|}{60} & {\color[HTML]{FE0000} \textbf{0.0394 $\pm$ 0.0021}} & \textbf{0.0487 $\pm$ 0.0018} & {\color[HTML]{3531FF} \textbf{0.0462 $\pm$ 0.0014}} & 0.0515 $\pm$ 0.0014 & 0.0572 $\pm$ 0.0017 & 0.0942 $\pm$ 0.0018 & 0.2289 $\pm$ 0.0048 & 0.3545 $\pm$ 0.0035 \\
\multirow{-2}{*}{NV6} & \multicolumn{1}{c|}{120} & \textbf{0.2336 $\pm$ 0.0023} & {\color[HTML]{FE0000} \textbf{0.2246 $\pm$ 0.0068}} & {\color[HTML]{3531FF} \textbf{0.2333 $\pm$ 0.0083}} & 0.2385 $\pm$ 0.0102 & 0.2632 $\pm$ 0.0019 & 0.3129 $\pm$ 0.0055 & 0.4814 $\pm$ 0.0096 & 0.3636 $\pm$ 0.0043 \\
 & \multicolumn{1}{c|}{60} & {\color[HTML]{FE0000} \textbf{0.0186 $\pm$ 0.0007}} & {\color[HTML]{3531FF} \textbf{0.0202 $\pm$ 0.0002}} & 0.0210 $\pm$ 0.0004 & 0.0238 $\pm$ 0.0012 & 0.0247 $\pm$ 0.0005 & 0.0408 $\pm$ 0.0008 & 0.0711 $\pm$ 0.0024 & 0.3497 $\pm$ 0.0058 \\
\multirow{-2}{*}{NV7} & \multicolumn{1}{c|}{120} & {\color[HTML]{3531FF} \textbf{0.0971 $\pm$ 0.0011}} & {\color[HTML]{FE0000} \textbf{0.0947 $\pm$ 0.0013}} & 0.1056 $\pm$ 0.0020 & 0.0985 $\pm$ 0.0027 & \textbf{0.0974 $\pm$ 0.0009} & 0.1156 $\pm$ 0.0009 & 0.1383 $\pm$ 0.0006 & 0.3633 $\pm$ 0.0089 \\ \hline
\end{tabular}}
        \caption{Forecasting performance comparison. We report the Mean Absolute Error (MAE) averaged over prediction lengths $T \in \{60, 120\}$. \textbf{Panel A} presents results with a lookback window of $L=30$, while \textbf{Panel B} uses a lookback window of $L=60$. The best results are highlighted in \textcolor[HTML]{FE0000}{red}, the second-best in \textcolor[HTML]{3531FF}{blue} and third-best in \textbf{bold}, respectively.}
    \label{appendix_results_mae}
    \vspace{-12pt}
\end{table*}

\begin{table*}[!htbp]
    \centering
{\fontsize{6.0}{8.0}\selectfont
\begin{tabular}{cccccccccc}
\hline
\multicolumn{10}{l}{Panel A. Lookback window $L=30$ } \\
\multicolumn{2}{c}{Methods} & NavFormer & PatchTST & PAttn & TimeFilter & iTransformer & TimesNet & DLinear & LTC-CFC \\ \hline
\multicolumn{2}{c}{Metric} & RMSE & RMSE & RMSE & RMSE & RMSE & RMSE & RMSE & RMSE \\ \hline
 & \multicolumn{1}{c|}{60} & {\color[HTML]{FE0000} \textbf{0.0235 $\pm$ 0.0006}} & {\color[HTML]{3531FF} \textbf{0.0249 $\pm$ 0.0003}} & \textbf{0.0253 $\pm$ 0.0002} & 0.0602 $\pm$ 0.0008 & 0.0283 $\pm$ 0.0016 & 0.0334 $\pm$ 0.0013 & 0.0837 $\pm$ 0.0060 & 0.3699 $\pm$ 0.0417 \\
\multirow{-2}{*}{NV2} & \multicolumn{1}{c|}{120} & {\color[HTML]{FE0000} \textbf{0.0955 $\pm$ 0.0012}} & \textbf{0.1015 $\pm$ 0.0003} & 0.1019 $\pm$ 0.0004 & {\color[HTML]{3531FF} \textbf{0.1011 $\pm$ 0.0002}} & 0.1039 $\pm$ 0.0012 & 0.1160 $\pm$ 0.0012 & 0.1551 $\pm$ 0.0014 & 0.3839 $\pm$ 0.0429 \\
 & \multicolumn{1}{c|}{60} & {\color[HTML]{3531FF} \textbf{0.0263 $\pm$ 0.0017}} & {\color[HTML]{FE0000} \textbf{0.0252 $\pm$ 0.0001}} & \textbf{0.0271 $\pm$ 0.0004} & 0.0365 $\pm$ 0.0010 & 0.0607 $\pm$ 0.0005 & 0.0409 $\pm$ 0.0001 & 0.1034 $\pm$ 0.0034 & 0.2995 $\pm$ 0.0465 \\
\multirow{-2}{*}{NV3} & \multicolumn{1}{c|}{120} & {\color[HTML]{3531FF} \textbf{0.1342 $\pm$ 0.0025}} & {\color[HTML]{FE0000} \textbf{0.1333 $\pm$ 0.0007}} & 0.1362 $\pm$ 0.0024 & 0.1565 $\pm$ 0.0014 & \textbf{0.1344 $\pm$ 0.0003} & 0.1478 $\pm$ 0.0018 & 0.2241 $\pm$ 0.0034 & 0.2954 $\pm$ 0.0175 \\
 & \multicolumn{1}{c|}{60} & {\color[HTML]{FE0000} \textbf{0.0474 $\pm$ 0.0002}} & {\color[HTML]{3531FF} \textbf{0.0482 $\pm$ 0.0001}} & \textbf{0.0498 $\pm$ 0.0002} & 0.0515 $\pm$ 0.0010 & \textbf{0.0498 $\pm$ 0.0011} & 0.0663 $\pm$ 0.0011 & 0.1368 $\pm$ 0.0041 & 0.4409 $\pm$ 0.0463 \\
\multirow{-2}{*}{NV4} & \multicolumn{1}{c|}{120} & {\color[HTML]{FE0000} \textbf{0.1075 $\pm$ 0.0012}} & {\color[HTML]{3531FF} \textbf{0.1115 $\pm$ 0.0020}} & \textbf{0.1318 $\pm$ 0.0011} & 0.1637 $\pm$ 0.0024 & 0.1623 $\pm$ 0.0015 & 0.1764 $\pm$ 0.0019 & 0.2282 $\pm$ 0.0060 & 0.4448 $\pm$ 0.0375 \\
 & \multicolumn{1}{c|}{60} & {\color[HTML]{FE0000} \textbf{0.0905 $\pm$ 0.0019}} & {\color[HTML]{3531FF} \textbf{0.0970 $\pm$ 0.0029}} & \textbf{0.1005 $\pm$ 0.0013} & 0.1117 $\pm$ 0.0012 & 0.1118 $\pm$ 0.0016 & 0.1466 $\pm$ 0.0027 & 0.2942 $\pm$ 0.0043 & 0.4343 $\pm$ 0.0024 \\
\multirow{-2}{*}{NV6} & \multicolumn{1}{c|}{120} & {\color[HTML]{FE0000} \textbf{0.4461 $\pm$ 0.0160}} & {\color[HTML]{3531FF} \textbf{0.4504 $\pm$ 0.0238}} & 0.4895 $\pm$ 0.0264 & \textbf{0.4823 $\pm$ 0.0051} & 0.5004 $\pm$ 0.0011 & 0.5517 $\pm$ 0.0159 & 0.6801 $\pm$ 0.0024 & 0.4460 $\pm$ 0.0031 \\
 & \multicolumn{1}{c|}{60} & {\color[HTML]{3531FF} \textbf{0.0361 $\pm$ 0.0009}} & {\color[HTML]{FE0000} \textbf{0.0355 $\pm$ 0.0008}} & \textbf{0.0424 $\pm$ 0.0011} & 0.0476 $\pm$ 0.0005 & 0.0470 $\pm$ 0.0014 & 0.0518 $\pm$ 0.0014 & 0.1281 $\pm$ 0.0027 & 0.3385 $\pm$ 0.0155 \\
\multirow{-2}{*}{NV7} & \multicolumn{1}{c|}{120} & {\color[HTML]{3531FF} \textbf{0.1648 $\pm$ 0.0037}} & 0.1725 $\pm$ 0.0033 & 0.1716 $\pm$ 0.0020 & 0.1887 $\pm$ 0.0058 & {\color[HTML]{FE0000} \textbf{0.1579 $\pm$ 0.0006}} & 0.1752 $\pm$ 0.0019 & 0.2174 $\pm$ 0.0015 & 0.4444 $\pm$ 0.0925 \\ \hline
\multicolumn{1}{l}{} & \multicolumn{1}{l}{} & \multicolumn{1}{l}{} & \multicolumn{1}{l}{} & \multicolumn{1}{l}{} & \multicolumn{1}{l}{} & \multicolumn{1}{l}{} & \multicolumn{1}{l}{} & \multicolumn{1}{l}{} & \multicolumn{1}{l}{} \\ \hline
\multicolumn{10}{l}{Panel B. Lookback window $L=60$ } \\
\multicolumn{2}{c}{Methods} & NavFormer & PatchTST & PAttn & TimeFilter & iTransformer & TimesNet & DLinear & LTC-CFC \\ \hline
\multicolumn{2}{c}{Metric} & RMSE & RMSE & RMSE & RMSE & RMSE & RMSE & RMSE & RMSE \\ \hline
 & \multicolumn{1}{c|}{60} & {\color[HTML]{FE0000} \textbf{0.0241 $\pm$ 0.0006}} & \textbf{0.0280 $\pm$ 0.0011} & {\color[HTML]{3531FF} \textbf{0.0269 $\pm$ 0.0003}} & 0.0321 $\pm$ 0.0007 & 0.0297 $\pm$ 0.0002 & 0.0373 $\pm$ 0.0009 & 0.0604 $\pm$ 0.0038 & 0.3674 $\pm$ 0.0439 \\
\multirow{-2}{*}{NV2} & \multicolumn{1}{c|}{120} & {\color[HTML]{3531FF} \textbf{0.1032 $\pm$ 0.0001}} & \textbf{0.1037 $\pm$ 0.0002} & {\color[HTML]{FE0000} \textbf{0.1024 $\pm$ 0.0003}} & 0.1044 $\pm$ 0.0004 & 0.1065 $\pm$ 0.0003 & 0.1173 $\pm$ 0.0019 & 0.1446 $\pm$ 0.0014 & 0.3893 $\pm$ 0.0581 \\
 & \multicolumn{1}{c|}{60} & \textbf{0.0326 $\pm$ 0.0008} & {\color[HTML]{FE0000} \textbf{0.0604 $\pm$ 0.0002}} & {\color[HTML]{3531FF} \textbf{0.0323 $\pm$ 0.0002}} & 0.0404 $\pm$ 0.0014 & 0.0371 $\pm$ 0.0004 & 0.0511 $\pm$ 0.0033 & 0.0803 $\pm$ 0.0031 & 0.2861 $\pm$ 0.0361 \\
\multirow{-2}{*}{NV3} & \multicolumn{1}{c|}{120} & \textbf{0.1506 $\pm$ 0.0005} & {\color[HTML]{3531FF} \textbf{0.1435 $\pm$ 0.0013}} & 0.1508 $\pm$ 0.0051 & 0.1587 $\pm$ 0.0031 & {\color[HTML]{FE0000} \textbf{0.1416 $\pm$ 0.0007}} & 0.1531 $\pm$ 0.0010 & 0.2285 $\pm$ 0.0014 & 0.2950 $\pm$ 0.0174 \\
 & \multicolumn{1}{c|}{60} & 0.0550 $\pm$ 0.0016 & {\color[HTML]{FE0000} \textbf{0.0492 $\pm$ 0.0006}} & {\color[HTML]{3531FF} \textbf{0.0509 $\pm$ 0.0002}} & 0.0524 $\pm$ 0.0003 & 0.0515 $\pm$ 0.0014 & 0.0687 $\pm$ 0.0055 & 0.1135 $\pm$ 0.0003 & 0.4321 $\pm$ 0.0443 \\
\multirow{-2}{*}{NV4} & \multicolumn{1}{c|}{120} & {\color[HTML]{3531FF} \textbf{0.1122 $\pm$ 0.0024}} & {\color[HTML]{FE0000} \textbf{0.1117 $\pm$ 0.0016}} & \textbf{0.1253 $\pm$ 0.0055} & 0.1257 $\pm$ 0.0047 & 0.1515 $\pm$ 0.0015 & 0.1764 $\pm$ 0.0020 & 0.2201 $\pm$ 0.0015 & 0.4571 $\pm$ 0.0421 \\
 & \multicolumn{1}{c|}{60} & {\color[HTML]{FE0000} \textbf{0.0988 $\pm$ 0.0036}} & {\color[HTML]{3531FF} \textbf{0.1112 $\pm$ 0.0021}} & \textbf{0.1126 $\pm$ 0.0027} & 0.1232 $\pm$ 0.0024 & 0.1268 $\pm$ 0.0024 & 0.1901 $\pm$ 0.0010 & 0.3477 $\pm$ 0.0050 & 0.4355 $\pm$ 0.0035 \\
\multirow{-2}{*}{NV6} & \multicolumn{1}{c|}{120} & 0.4790 $\pm$ 0.0029 & {\color[HTML]{FE0000} \textbf{0.4502 $\pm$ 0.0055}} & \textbf{0.4770 $\pm$ 0.0134} & {\color[HTML]{3531FF} \textbf{0.4690 $\pm$ 0.0077}} & 0.4992 $\pm$ 0.0027 & 0.5604 $\pm$ 0.0050 & 0.7332 $\pm$ 0.0095 & 0.4411 $\pm$ 0.0025 \\
 & \multicolumn{1}{c|}{60} & {\color[HTML]{3531FF} \textbf{0.0421 $\pm$ 0.0011}} & {\color[HTML]{FE0000} \textbf{0.0407 $\pm$ 0.0005}} & \textbf{0.0452 $\pm$ 0.0009} & 0.0499 $\pm$ 0.0022 & 0.0489 $\pm$ 0.0006 & 0.0738 $\pm$ 0.0015 & 0.1027 $\pm$ 0.0016 & 0.4216 $\pm$ 0.0044 \\
\multirow{-2}{*}{NV7} & \multicolumn{1}{c|}{120} & {\color[HTML]{3531FF} \textbf{0.1802 $\pm$ 0.0010}} & 0.1836 $\pm$ 0.0021 & 0.1928 $\pm$ 0.0035 & \textbf{0.1833 $\pm$ 0.0046} & {\color[HTML]{FE0000} \textbf{0.1707 $\pm$ 0.0004}} & 0.1928 $\pm$ 0.0034 & 0.2047 $\pm$ 0.0005 & 0.4429 $\pm$ 0.0033 \\ \hline
\end{tabular}}
        \caption{Forecasting performance comparison. We report the Root Mean Squared Error (RMSE) averaged over prediction lengths $T \in \{60, 120\}$. \textbf{Panel A} presents results with a lookback window of $L=30$, while \textbf{Panel B} uses a lookback window of $L=60$. The best results are highlighted in \textcolor[HTML]{FE0000}{red}, the second-best in \textcolor[HTML]{3531FF}{blue} and third-best in \textbf{bold}, respectively.}
    \label{appendix_results_rmse}
    \vspace{-12pt}
\end{table*}

\end{document}